\newtheorem{prop}{Proposition}
\newtheorem{lemma}{Lemma}
\renewcommand\cite{\citep}
\newcommand*{\doubleN}{\ensuremath{\mathbb{N}}}
\newcommand*{\doubleE}{\ensuremath{\mathbb{E}}}
\newcommand{\indep}{\perp \!\!\! \perp}
\title{Bayesian Meta-Reinforcement Learning with \\Laplace Variational Recurrent Networks}
\author{Joery A. de Vries\textsuperscript{$\dagger$}, Jinke He, Mathijs M. de Weerdt, Matthijs T.J. Spaan}
\keywords{Variational Inference, Bayesian Reinforcement Learning, Meta-Reinforcement Learning, Uncertainty Estimation} % Your keywords
\begin{document}

\makeCover  % Create the cover page
\maketitle  % Make the title section

\begin{abstract}
 Meta-reinforcement learning trains a single reinforcement learning agent on a distribution of tasks to quickly generalize to new tasks outside of the training set at test time. From a Bayesian perspective, one can interpret this as performing amortized variational inference on the posterior distribution over training tasks. Among the various meta-reinforcement learning approaches, a common method is to represent this distribution with a point-estimate using a recurrent neural network. We show how one can augment this point estimate to give full distributions through the Laplace approximation, either at the start of, during, or after learning, without modifying the base model architecture. With our approximation, we are able to estimate distribution statistics (e.g., the entropy) of non-Bayesian agents and observe that point-estimate based methods produce overconfident estimators while not satisfying consistency. Furthermore, when comparing our approach to full-distribution based learning of the task posterior, our method performs on par with variational baselines while having much fewer parameters.
\end{abstract}

% Main paper structure

\section{Introduction}
Reinforcement Learning (RL) concerns itself with making optimal decisions from data \cite{sutton_reinforcement_2018}. This is typically achieved by letting an agent generate data in an environment and then optimizing a cost function of the agent's parameters given this data.  In meta-RL, an agent is trained to optimize an expected cost over a prior distribution of environments \citep{finn2017modelagnostic, chen2017learning, beck_survey_2023}. The idea is then that, given a trajectory of new data, an agent can infer latent environment parameters and successfully adapt its action policy online. This is known as zero-shot or few-shot adaptation or learning \citep{beck_survey_2023}. In recent years, this paradigm has shown impressive results, for example, by the Capture the Flag agent \citep{Jaderberg_2019} or the Adaptive Agent \citep{ada_timescale_2023}.

In any meta-RL algorithm, an accurate approximation of the latent parameter distribution given data, also known as the task posterior distribution, is useful to quantify the agent's uncertainty \cite{grant2018recasting}. Accurate quantification of uncertainty enables agents to detect distribution shifts \cite{daxberger_laplace_2021} or guide exploration through novelty signals \cite{osband_psrl_2013, ramanan_plantoexplore_2020}. Importantly, on deployment, distribution shift detection is essential for timely human intervention or retraining. This ultimately improves the robustness and efficacy of our algorithms and allows us to more reliably inspect failure cases.

A common approach in meta-RL is to model the task posterior with point estimates, e.g., using the hidden state of recurrent neural networks (RNN) \citep{chen2017learning}. However, this prevents us from exploiting useful distributional statistics. Another downside of using point-estimates is the increased risk of overconfidence unless the true posterior is sharply peaked at that particular point. This would imply that there exists almost no uncertainty about the environment, which is typically a strong and unrealistic assumption. As a consequence, point-estimate based meta-RL has been known to overfit to its training distribution, leading to brittle downstream performance \cite{xiong_practical_2021, greenberg2023train}.

Arguably, a better approach would be to explicitly parameterize some full distribution (e.g., a Gaussian) \citep{chung2016recurrent, zintgraf_varibad_2020}. However, approximate Bayesian methods are slower to train and are still often outperformed by simple point-estimate methods in terms of expected returns \cite{greenberg2023train} even though they model the posterior more accurately. This could be explained by the fact that non-Bayesian methods enjoy reduced sampling noise, easier numerical representation, and improved model capacity by not having to learn a complex posterior model \cite{goyal2017zforcing, hafner2019learning}. 

To get benefits from Bayesian methods when using non-Bayesian models, we introduce the \textit{Laplace variational recurrent neural network} (Laplace VRNN) which utilizes the Laplace approximation to extend RNN-based meta-reinforcement learning. Our method can perform uncertainty quantification for non-Bayesian meta-RL agents without modifying the model architecture or loss function, and without needing to retrain any parameters. In other words, the consequence of the Laplace approximation is that we can apply it at any point during model training. When applied after training, this is often referred to as a \textit{post-hoc} posterior \cite{daxberger_laplace_2021}. This allows us to make use of deterministic pre-training schedules and benefit from their aforementioned advantages while also enjoying the benefits of Bayesian methods. Although the Laplace approximation has already been explored in meta-RL \cite{grant2018recasting, finn2019probabilistic}, it has not been applied in memory-based methods \cite{duan_rl2_2016, beck_survey_2023} which is what we explore. 

The Laplace approximation is a simple method that only requires the \emph{curvature} of a distribution's log-likelihood at a local maximum \citep{daxberger_laplace_2021}. For a Gaussian mean-field assumption on our posterior model \cite{bishop2007}, we only require the Jacobian matrix of the RNN output with respect to its hidden state. This gives us a Gaussian distribution for the task posterior distribution centered at the RNN hidden state with inverse covariance equal to the sum of Jacobian outer products. This is a comparatively cheap approximation compared to typical methods that apply the Laplace approximation to the much higher dimensional neural network parameters \cite{grant2018recasting, daxberger_laplace_2021, martens2020optimizing}.

We empirically validate that our method can \emph{reliably estimate posterior statistics of our non-Bayesian baselines without degrading performance} on supervised and reinforcement learning domains. Similarly to  \citet{xiong_practical_2021}, our results show that non-Bayesian meta-RL agents do not learn consistent estimators, however, we also find that the learned representations are overconfident. This could be seen by inspecting the \textit{post-hoc} posterior provided by the Laplace approximation, which showed low entropy while not converging to a stable distribution. Furthermore, when comparing our method against variational inference baselines, we find that our Laplace method performs on par in terms of mean returns. Ultimately, this shows that the Laplace approximation can complement (or serve as an alternative to) variational inference methods for uncertainty quantification, since we do not \emph{learn} our local uncertainty but estimate this based on the model's fitted parameters.

\section{Related Work}

\paragraph{Meta-Reinforcement Learning} Meta-learning has been described from various viewpoints, ranging from contexts \citep{blockcontext_sodhani_2022}, latent-variable models \citep{garnelo2018neural, wu2019metaamortized, gordon2019metalearning}, amortized inference \cite{gershman2014amortized, wu2019metaamortized}, and ``learning to learn'' \citep{beck_survey_2023, wang2017learning, hospedales2020metalearning}. Applying these ideas to reinforcement learning has been gaining traction within the field recently, for example, learning maximum likelihood estimation algorithms (like our work) \citep{learn2learn_lee_2016, garnelo2018neural}, probability density functions \citep{lu2022discovered, bechtle2021metalearning}, or model exploration strategies \citep{gupta2018metareinforcement}.

Related to our work is the neural process by \citet{garnelo2018neural} which formalizes using meta-learning to infer a set of (global) latent variables for a generative distribution. Since we test on reinforcement learning problems, one could view our model as a type of non-stationary stochastic process or sequential neural process \citep{oksendal_stochastic_2003, gautam_sequential_2019}. This makes our model and optimization objective similar to the PlaNet model \cite{hafner2019learning}, however, we do not condition our recurrent model on samples from the task posterior so we can obtain an analytical solution. This choice does reduce the non-linearity of our model, the topic of linear vs. non-linear state space models is still active research \cite{gu2023mamba}.

\paragraph{Bayesian Reinforcement Learning} Learning an optimal control policy conditional on a task-posterior amounts to approximating the Bayes-adaptive optimal policy \citep{duff_optimal_2002}. In this framework an agent is conditioned on its current state and a history of observations, the history can then be used to produce a belief distribution over latent variables. The Bayes-adaptive optimal policy maximizes the environment returns in expectation over this belief \citep{duff_optimal_2002, ghavamzadeh_bayesian_2015, zintgraf_varibad_2020, mikulik_metatrained_2020}. Although meta-learning induces uncertainty only over the reward and transition function, many have also successfully tackled the problem as a general partially observable Markov decision process \citep{chen2017learning, ada_timescale_2023}. Doing this is orthogonal to our method, however, we focus on the Bayes-adaptive framework.

\paragraph{Laplace Approximation} The Laplace approximation has been explored for meta-learning in, for example, the model agnostic meta-learning algorithm \citep{finn2017modelagnostic, finn2019probabilistic} to achieve more accurate inference, or for continual learning \cite{Kirkpatrick_2017} as a regularizer for weight-updates. The main obstacle to using the Laplace approximation in practice is the computation of the inverse Hessian \cite{mackay_bayesian_1992}, which alone has quadratic memory scaling in the number of model parameters. For this reason, in Bayesian neural networks, the block-diagonal factorization has become quite popular \citep{martens2020optimizing}, as used by TRPO \citep{schulman2017trust} or second order optimizers \citep{botev2017practical}. Our method bypasses the costly Hessian problem by modeling a distribution on a small subset of the full parameter set. In contrast to doing Bayesian linear regression on the last layer of a neural network, our method can express multimodal distributions.

\section{Preliminaries}
We want to find an optimal policy $\pi$ for a sequential decision-making problem, which we formalize as an episodic Markov decision process \cite{sutton_reinforcement_2018}. We define states $S\in \mathcal{S}$, actions $A \in \mathcal{A}$, and rewards $R \in \mathbb{R}$ as random variables that we sample in sequences. We write $H^i = \{S_t, A_t, R_t\}^T_{t=1}$ to abbreviate the joint random variable of episode $i \in \doubleN$,
\begin{align}
    p(H^i) = \prod_{t=1}^T p(R_t | S_t, A_t) \pi(A_t | S_t) p(S_t | S_{t-1}, A_{t-1}),\nonumber
\end{align}
where $p(S_1 | A_0, S_0) \overset{\Delta}{=} p(S_1)$ is the initial state distribution, $\pi(A_t | S_t)$ is the policy, $p(S_{t+1} | S_t, A_t)$ is the transition model, and $p(R_t | S_t, A_t)$ is the reward model. To avoid confusion, we denote episodes in the \textit{superscript} from $i =1, \dots, n$ and time in the \textit{subscripts} from $t=1, \dots, T$. For convenience, we subsume the common discount factors $\gamma \in [0, 1]$ into the transition probabilities as a global termination probability of $p_{\text{term}} = 1 - \gamma$ \cite{levine2018reinforcement} assuming that the MDP will end in an absorbing state with zero rewards. The objective is to find $\pi^*$ such that $\mathbb{E}_{p(H)} \sum_t R_t$ is maximized.

\subsection{Inference in Meta-RL} In contrast to single-task Reinforcement Learning (RL), in meta-RL we want to find the optimal policy $\pi^*$ to a distribution over different environments. The agent typically does not know which environment it is currently being deployed in and needs to adaptively switch strategies based on online feedback. We assume that the agent can adapt over \emph{multiple} episodes $H^{1:n}$, as opposed to only one episode $H^1$, which is also known as zero-shot or few-shot adaptation \cite{beck_survey_2023}. This approach can be formalized using the concept of global latent variables $Z$. For a fixed $\pi$, each trajectory $H$ that we sample depends on a sampled latent variable $Z \sim p(Z)$, which could be interpreted as a unique identifier of the current environment. Our agent does not directly observe $Z$, but this variable influences the reward and transition models of the environment. 

With full generality, we define the generative process,
\begin{align} 
    \label{eq:meta_sampling_joint}
    p(H^{1:n}, Z^{1:n}) = \prod_{i=1}^n p(H^{i} | Z^{i}) p(Z^{i} | Z^{<i}, H^{<i}),
\end{align}
where the term $p(H^{i} | Z^{i})$ indicates the sampling distribution of the environment under our current model for $Z$, and the term $p(Z^{i} | Z^{<i}, H^{<i})$ denotes the posterior distribution over latent variables $Z$ given all the data we have observed so far. For brevity, we do not expand the sampling distribution $p(H^{i} | Z^{i})$ here (see Appendix~\ref{ap:lowerbound}), however, this expression hides that the posterior model is also updated inter-episodically at every $S^i_t, A^i_t, R^i_t \in H^i_t$. $Z^{i}$. Note that this model is fully general, and is perhaps more common in continual-RL settings \cite{khetarpal_towards_2022}, yet it reflects the model factorization and inference capabilities captured by most memory-based meta-RL methods \cite{duan_rl2_2016}. 
This generality can be both a feature and a downside; the agent can capture broad environment settings, but combined with function approximation it obfuscates what the agent learns and how posterior uncertainty is represented.

\paragraph{Amortized Inference} 
The posterior $ p(Z^{i} | Z^{<i}, H^{<i})$ from Eq.~\eqref{eq:meta_sampling_joint} is usually intractable; a common approach to deal with this is to use variational inference \cite{bishop2007}. This approach represents the posterior with another distribution $q \in \mathcal{Q}$ within some simpler model class, and then chooses $q$ to maximize a lower-bound to the data marginal (see Appendix~\ref{ap:lowerbound}), i.e.,
\begin{align}
    \label{eq:metarl_joint}
    \ln p(H^{1:n}) &\ge \max_{q \in \mathcal{Q}} \mathbb{E}_{q(Z^{1:n} | H^{1:n})} \sum_{i=1}^n \ln p(H^{i} | Z^{i}) 
    - \mathit{KL} \left( q (Z^{\le i} | H^{<i}) \Vert p(Z^{\le i} | H^{<i})\right).
\end{align}
This involves a functional optimization problem to be repeatedly solved at every timestep. Therefore, a more desirable approach is to amortize this with a learned neural network $f_\theta$ that maps past observations and latents directly to a distribution $q \in \mathcal{Q}$ \cite{gershman2014amortized}. In practice, this can be achieved by using a parametric family for $q_\phi$ and using $f_\theta$ to predict the parameters $\phi \in \Phi$. 

To find the parameters $\theta$ that maximize the evidence lower-bound, we can derive an amortized learning objective. If we abbreviate the maximand from Eq.~\eqref{eq:metarl_joint} as $\mathcal{L}(q, H^{1:n})$ and define $f_\theta: \mathcal{H}^{n} \rightarrow \Phi$ (omitting $Z$ for brevity), we always have the inequality
\begin{align} \label{eq:amortization}
    \max_\theta \: & \doubleE_{p(H^{1:n})} \: \mathcal{L}(q_{\phi = f_\theta(H^{1:n})}, H^{1:n})
    \le  \doubleE_{p(H^{1:n})}  \: [ \max_{\phi \in \Phi} \mathcal{L}(q_\phi, H^{1:n})].
\end{align}
This shows that \textbf{1)} with a sufficient function class for $f_\theta$ and optimal parameters $\theta^*$, we obtain equality when $f_{\theta^*}(H^{1:n}) = \arg \max_{\phi\in \Phi} \mathcal{L}(q_\phi, H^{1:n}), \forall H^{1:n} \in \mathcal{H}^n$, and \textbf{2)} $\theta^*$ can be obtained through a straightforward training procedure. The l.h.s. requires us to be able to sample from $p(H^{1:n})$, and that $\mathcal{L}$ is end-to-end differentiable with respect to $\theta$, which enables the use of stochastic gradient methods \cite{kingma2017adam}. 

\paragraph{From Inference to Control} So far, we have mostly discussed how meta-RL agents can perform inference to latent variables of the environment for a fixed policy $\pi$. To define optimal behavior in the generative process we extend the probabilistic model of Eq.~\eqref{eq:meta_sampling_joint} using the control as inference framework \cite{levine2018reinforcement}. This enables us to apply our Bayesian tools directly to our RL-agent in a theoretically sound manner and, for specific design choices, recovers the typical meta-RL training objective \cite{duan_rl2_2016, wang2017learning}.

Control as inference reformulates classical RL as an inference problem by conditioning the distribution over trajectories $p(H^i)$ on a desired outcome $\mathcal{O}$. This outcome $\mathcal{O} \in \{0, 1\}$ is a binary variable indicating whether a trajectory achieves the outcome ($1)$ or not ($0$). The likelihood of this outcome given a trajectory $H^{i}$ follows the exponentiated sum of rewards, $p(\mathcal{O} = 1 | H^{i}) \propto \exp(\sum_t R_t^{i})$. Using Bayes rule, we can then infer the desired policy as $p(H^{i} | \mathcal{O} = 1)$.

Similarly to the latent variable posterior $q_\phi$, we can estimate $p(H^{1:n} | \mathcal{O} = 1)$ through variational inference by defining a lower bound to the log-likelihood of the outcome variable $\ln p(\mathcal{O} = 1 | H^{1:n})$ using a variational policy $q_\pi(A_t^i | S_t^i, Z_t^i)$. Given our choice for the outcome likelihood, this recovers a regularized RL objective \cite{geist_theory_2019} (see the derivation in Appendix~\ref{ap:lowerbound:rl}),
\begin{align}
    \label{eq:rl_lb}
    &\mathcal{L}(q_\phi, q_\pi) = \mathbb{E}_{q_\phi(H^{1:n}, Z^{1:n})} \sum_{i=1}^n \sum_{t=1}^{T_i} R_t^{i} - \mathit{KL}
    \left(
        q_\pi \Vert \pi
    \right) 
    \\
    & \hspace{8em}
    - \mathit{KL}
    \left(
        q_\phi(Z_t^{i} | Z_{<t}^{\le i}, H_{<t}^{\le i}) \Vert p(Z_t^{i} | Z_{<t}^{\le i}, H_{<t}^{\le i} )
    \right) \nonumber
\end{align}
which is an extension of the MPO objective by \citet{abdolmaleki2018maximum} to include the latent-variable posterior and its KL-term. The indexing of the conditional is slightly overloaded for brevity, it indicates that the task posterior is conditioned on all prior data.

Unfortunately, this lower-bound does not give a practical training objective for both inference or amortization as shown in Eq.~\eqref{eq:amortization}, nor does it find the optimal policy given a fixed $\pi$. Therefore, practitioners often include the following design choices.
\begin{enumerate}
    \item The true posterior $p(Z_t^{i} | Z_{<t}^{\le i}, H_{<t}^{\le i})$ is substituted by the previous variational posterior $q(Z_{t-1}^{i} | Z_{<t-1}^{\le i}, H_{<t-1}^{\le i})$, using only a ``true'' prior $p(Z_0)$ at time and episode 0 \cite{hafner2019learning}.
    \item The policy $\pi$ is iteratively updated to the variational optimum $q_\pi$ \cite{abdolmaleki2018maximum}.
    \item KL-penalties are scaled by parameters $\beta_\pi, \beta_q$.
\end{enumerate}
Finally, observe that the amortized objective for Eq.~\eqref{eq:rl_lb} (i.e., substituted into Eq.~\eqref{eq:amortization}) recovers the typical memory-based meta-RL objective when using a point-estimate (Dirac posterior) $q_\phi(Z | \dots) = \delta(\phi - Z)$ and ignoring the KL-penalties \cite{duan_rl2_2016}.

\section{Laplace Variational RNNs} \label{sec:laplace}
We introduce the Laplace variational recurrent neural network (Laplace VRNN) to make a relatively simple approximation to the variational task posterior $q_\theta \approx \hat{q}_\theta$, to be used in the lower-bounds of Eq.~\eqref{eq:metarl_joint} and Eq.~\eqref{eq:rl_lb}, using the Laplace approximation \cite{mackay_bayesian_1992, daxberger_laplace_2021}. This enables the construction of proper distributions over the latent-variables without introducing additional variational parameters. The idea is that we use this to extend point-estimate methods (i.e., base RNNs) to use distributions at any point during training. For exposition, we introduce our approximation starting from a simpler variational distribution $q_\phi(Z_t | H_{<t})$, dropping superscripts. The full derivation is given in Appendix~\ref{ap:laplace:derivation}.

We use the predicted distribution parameters $\phi_t = f_\theta( H_{<t})$ as a helper variable, and interchange the notation $q_{\phi_t}(Z_t | H_{<t}) = q_\theta(Z_t | H_{<t}, \phi_t)$. Most importantly, the statistical amortization by Eq.~\eqref{eq:amortization} allows us to interpret the mapping $f_\theta: H \mapsto \phi$ as a learned summary statistic for the distribution of $Z$; i.e., a \emph{maximum a posteriori} estimate. We assume that $\phi_t$ is computed autoregressively with a recurrent neural network (RNN) such that $\phi_{t+1} = f_\theta(S_{t}, A_{t}, R_{t}; \phi_{t})$.

We then factorize using a mean-field assumption,
\begin{align}
    q_\theta(Z_{t} | H_{<t}, \phi_{t}) 
    &= \frac{1}{q_\theta(Z_{t} | \phi_{t})^{t-2}} \prod_{i=1}^{t-1} q_\theta(Z_{t} | S_i, R_i, A_i, \phi_{t}), \nonumber
    \\
    &= \exp [ (2 - t) \ln q_\theta(Z_{t} | \phi_{t}) +
    \sum_{i=1}^{t-1} \ln q_\theta(Z_{t} | S_i, R_i, A_i, \phi_{t})  ]\nonumber
    \\ 
    &= \exp h_\theta(Z_{t}; H_{<t}, \phi_{t}),
\end{align}
which gives us the target function $h_\theta$ that we wish to approximate (for the first step, see Lemma~\ref{lemma:iid_cond}; Appendix~\ref{ap:factorization}). For a given $\theta$ and data $H_{<t}$, we use the second order Taylor expansion of $h_\theta \approx \hat{h}_\theta$ linearized at $\phi=\phi_t$, we then exponentiate $\hat{h}_\theta$ and renormalize. Assume that $\phi_t$ is \emph{maximum a posteriori} to $q_\theta(Z_t | H_{<t}, \phi_t)$, then we obtain the Laplace approximation,
\begin{align}
    q_\theta(Z_t | H_{<t}) &\approx \frac{\exp \hat{h}_\theta(Z_t; H_{<t}, \phi_t)}{\int \exp \hat{h}_\theta(z; H_{<t}, \phi_t) dz} \nonumber
    \\[8pt]
    &= \mathcal{N}(\phi_t, \nabla^2_\phi \ln q_\theta( Z_t | H_{<t}, \phi)|_{\phi=\phi_t}),
\end{align}
where $\nabla^2_\phi \ln q_\theta$ is the Hessian of our log-posterior. 

To complete our model from Eq.~\eqref{eq:metarl_joint} and Eq.~\eqref{eq:rl_lb}, we can solve the expectation over $\mathbb{E}_{q_\theta(Z_{<t} | H_{<t})}$ for the posterior $q_\theta(Z_t | H_{<t}, Z_{<t})$ at each time-step $t$, by assuming a convolution of Gaussian densities. The result of this convolution is well-known to be another Gaussian with summed parameters \cite{bromiley2003products},
\begin{align}
     \mu_t &= \phi_t + \sum_{i=1}^{t-1} \mu_i, \label{eq:meansum}
     \\[0pt]
    \Lambda_t &= -\nabla^2_\phi \ln q_\theta( Z | H_{<t}, \phi) |_{\phi = \phi_t} + \sum_{i=1}^{t-1} \Lambda_i. \label{eq:covsum}
\end{align}
In practice, we use a smaller window $H_{k:t-1}$ and $Z_{k:t-1}$ inside the conditional for efficiency. In summary, this implements an RNN where we sum the last $k$ hidden states and covariances for the output-Gaussian, and where the covariances are produced by the Hessian of the log-posterior with respect to the hidden state.

The assumption that $\phi_t$ is \textit{maximum a posteriori} is quite strict and assumes equality in the amortization objective Eq.~\eqref{eq:amortization}. For a sub-optimal $\theta$ or insufficient function class $f_\theta$, this can induce a first-order error in the Taylor expansion of the variational log-posterior, which results in a worse approximation. Furthermore, most RNN methods do not sum their hidden states for the predictive model, which mismatches our formulation. It is also not obvious what representations RNN-based meta-RL agents learn and, thus, which model factorization would best suit the agent for construction of our Laplace approximation. We investigate these technicalities and assumptions in the next section.

\paragraph{Special Case} Our method obtains a particularly nice form if we choose $q_\theta(Z_t | S_i, A_i, R_i, \phi_t)$ to be standard Gaussian and use an uninformative prior for $q_\theta(Z_t | \phi_t)$. In that case, the Hessian of the log-posterior $q_\theta(Z_t | H_{<t}, \phi_t)$ becomes a sum of outer products of our RNN state Jacobians w.r.t. $\phi$. Let $x_i=(S_i, A_i, R_i)$, this gives the inverse covariance,
\begin{align}
    \Lambda_t &= \sum_{i=1}^{t-1} (\nabla_\phi f_\theta(x_i; \phi) |_{\phi=\phi_t})(\nabla_\phi f_\theta(x_i; \phi) |_{\phi=\phi_t})^\top,
\end{align}
which is cheap to compute with forward accumulation \cite{jax2018github}, see Prop.~\ref{ap:prop:rnn_outerjac} in Appendix~\ref{ap:laplace:derivation}.

\paragraph{Posterior Predictive} If we now choose a policy $\pi(A_t | S_t, Z_t)$ that is linear in $Z_t$, then our full model would recover a type of Gaussian process \citep{immer2021improving, rasmussen_gaussian_2004}. However, we model this term with another neural network $\pi_\psi(A_t | S_t, Z_t)$ to improve expressiveness. Our policy is then defined by the \textit{posterior predictive} $\pi_\psi(A_t | S_t, H_{<t})$, which we compute using Monte-Carlo,
\begin{align}
    &\int \pi_\psi(A_t | S_t, z_t) q_\theta (z_t | H_{<t}) dz_t
    \approx 
    \hspace{0.2em} 
    \frac{1}{k} \sum^{k}_{i=1} \pi_\psi(A_t | S_t, Z_t=z^{(i)}), \quad z^{(i)} \sim q_\theta (Z_t | H_{<t}),
\end{align}
overloading superscripts to index Monte-Carlo samples. This induces a finite mixture for the policy where $k=1$ corresponds to posterior sampling \cite{osband_psrl_2013}. The parameters $\theta$ and $\psi$ were trained jointly in an end-to-end manner (as defined in the l.h.s. of Eq.\eqref{eq:amortization}).

Interestingly, during training we found output aggregation to train more stably in the loss when $k > 1$. Thus, during training we chose to average the predicted logits of $\pi_\psi$ over samples $z^{(i)}$, or in the continuous case, we averaged over the parameters of a parametric distribution \cite{wang2020striving}, e.g., the mean and variance of a Gaussian (see Appendix~\ref{ap:ensemble} for a discussion). 

\section{Experimental Validation}

\begin{figure*}[t]
    \centering
    \includegraphics[width=\linewidth]{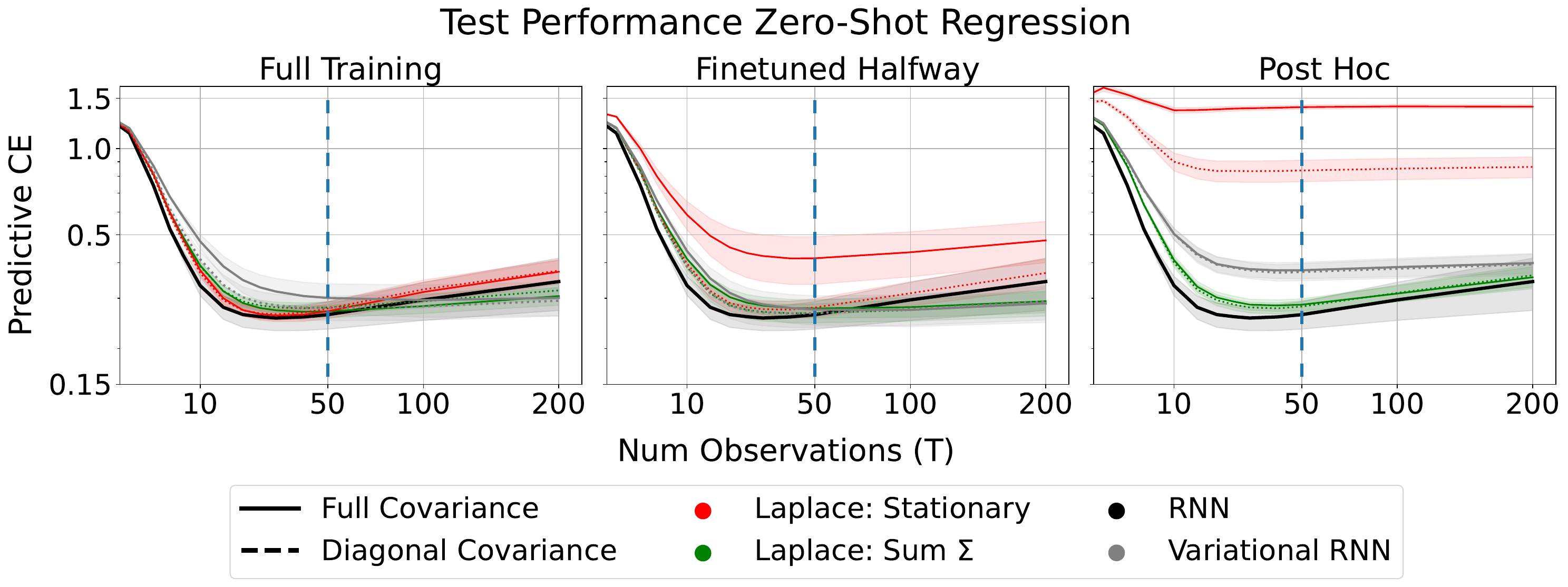}
    \caption{Final performance on the zero-shot regression task in terms of predictive cross-entropy, this should decrease over time. The left column shows results for complete model training, the middle and right columns perform model pre-training with the RNN (black). The middle column includes parameter finetuning, the right column does not. The blue dashed line indicates the training cut-off ($T=50$).}
    \label{fig:supervised_test}
\end{figure*}
In order to apply our method to memory-based meta-RL agents in a manner that didn't alter the network architecture, we required a few extra simplifications to our more general model from Section~\ref{sec:laplace}. 
Crucially, for the \emph{non-stationary} assumption on the log-posterior only, this required us to omit the mean aggregation of Eq.~\eqref{eq:meansum}. 
This wasn't needed for the stationary factorizations, see Appendix~\ref{ap:laplace:final} for a more detailed discussion.
To evaluate our factorizations and design choices, we performed experiments to answer the following:
\begin{enumerate}
    \item \textbf{Utility:} Does our method give useful posterior statistics for a non-Bayesian baseline?
    \item \textbf{Sensitivity:} What model assumptions for the Laplace VRNN are empirically effective?
    \item \textbf{Performance:} When used as an alternative to variational inference, does the Laplace VRNN perform at least on par with existing methods?
\end{enumerate}

Our point-estimate (RNN) baseline was implemented with a long-short term memory architecture \citep{hochreiter_lstm_1997}. The VRNN baseline \cite{chung2016recurrent} extends the RNN by predicting the mean and covariance for a Gaussian distribution as a transformation of the RNN output. For the RNN and VRNN we assumed a stationary factorization of the posterior $q_\theta(Z_t | H_{t-1})$, which is a simplification of the fully general posterior shown in Eq.~\eqref{eq:meta_sampling_joint} and most accurate to the true generative process. We intermittently created model snapshots of the point-estimate baseline (RNN) and finetuned these snapshots over our parameter grid for the Laplace VRNN and VRNN.

All experiments were repeated over $r=30$ seeds (number of network initializations), we tested intermediate model parameters by measuring their in-distribution performance and model statistics for $B=128$ samples (number of test-tasks). We report 2-sided confidence intervals with a confidence level $\alpha=0.99$ for each metric $X$ aggregated over the seeds $r$ and the test-tasks $B$. For the full details on the experiment and baseline setup, see Appendix~\ref{ap:hyperparams}.

\subsection{Supervised Learning} 
As a didactic test-setup, we evaluated our method on noiseless 1D regression tasks. We generated data by sampling parameters to a Fourier expansion and then sampling datasets $\{\{(X_i, f_j(X_i))\}_{i=1}^T\}_{j=1}^n$ where each $X_i \sim \text{Unif}(-1, 1)$, $f_j \sim p_{\text{Fourier}}(f)$, and $n=256, T=50$. 
During training, we optimized a lower bound for a supervised domain using a weight for the KL-term of $\beta=10^{-2}$ (see Eq.~\eqref{eq:metarl_joint}; Appendix~\ref{ap:lowerbound:supervised}). During testing, we computed the predictive cross-entropy (CE) with the true data-generating distribution and our model. So, at each step $t$, we used $H_t = \{X_i, f(X_i)\}_{i=1}^t$ to estimate the posterior predictive distribution $\mathbb{E}_{q_\theta(Z_t | H_t)} p_\theta(Y_i | X_i, Z_t)$ with Monte-Carlo using $m=30$ samples. The predictive CE was estimated using Monte-Carlo over a large test dataset.

\paragraph{Variations} Our Laplace VRNN used a stationary $q_\theta(Z_t | H_{<t})$ assumption (Laplace: Stationary; red) and a Markovian $q_\theta(Z_t | H_{t-1}, Z_{t-1})$ assumption (Laplace: Sum $\Sigma$; green) on the graphical model from Eq.~\eqref{eq:meta_sampling_joint}. In practice, the stationary model computes the covariance for each datapoint in $H_{<t}$ at each $t$, whereas, the Markovian model sums the covariances for each pair $(X_i, Y_i)$. To reduce clutter, we only show the Laplace VRNN ablation that sums the covariance, which also performed best among our variations (c.f., Appendix~\ref{ap:sup_supervised_results}). 
We tested both diagonal and full covariance matrices. 

\begin{figure*}[t!]
    \centering
    \includegraphics[width=\linewidth]{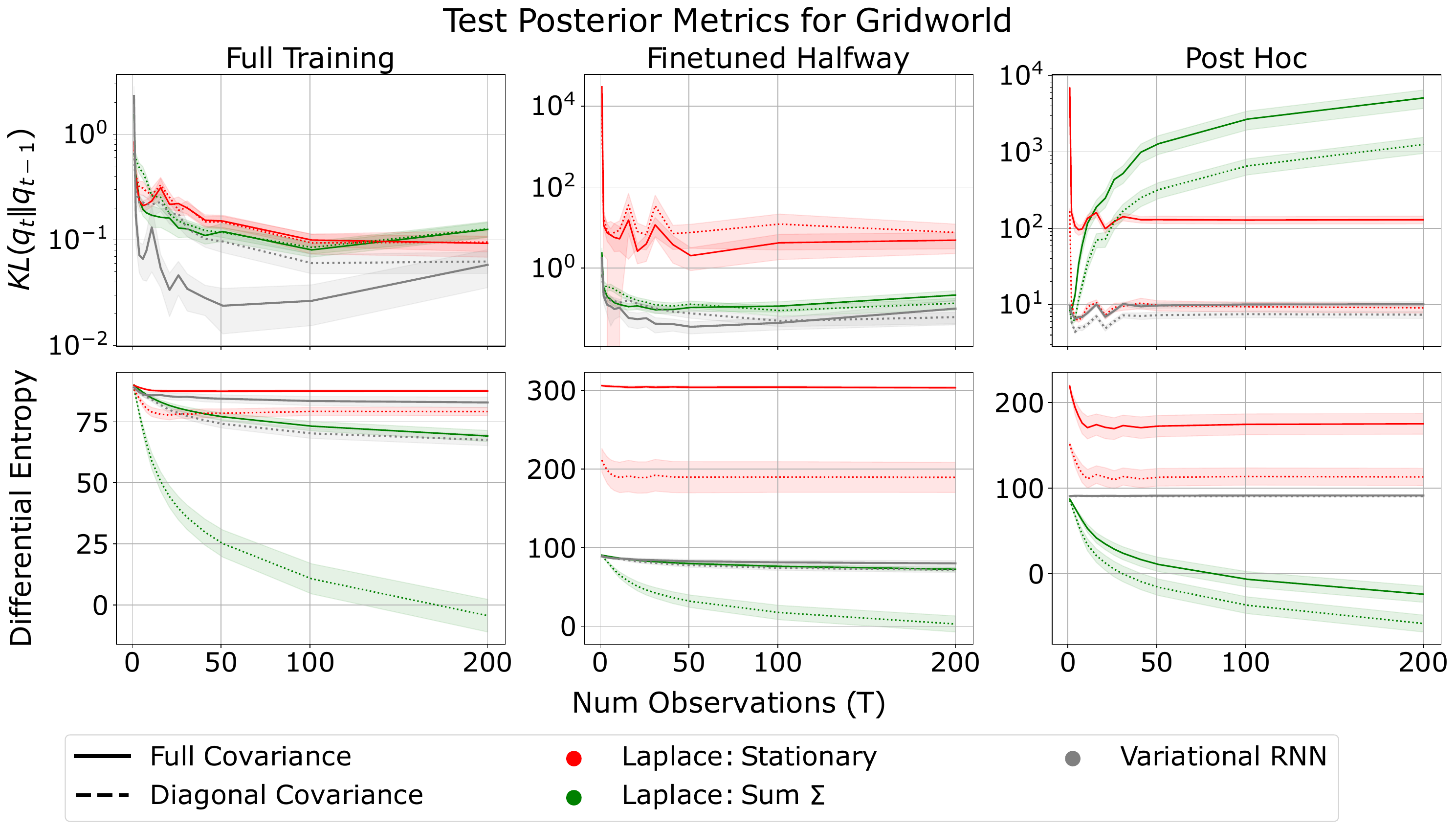}
    \caption{Evolution of summary statistics for the posterior model during testing. The top row shows the KL-divergences between consecutive posteriors $q_t$ and $q_{t+1}$, and the bottom row shows the model entropy over time. In principle, we expect all lines to decrease gradually with more observations. When applying our Laplace approximation with summed covariances (green) after deterministic pre-training (right-column), we see that the posterior becomes more and more confident but does not converge to a stable distribution.}
    \label{fig:grid_test_post}
\end{figure*}
\paragraph{Results} As shown in Figure~\ref{fig:supervised_test}, across our comparisons the predictive CE goes down initially (except for the post-hoc stationary Laplace VRNN), however slightly increases again after the size of the dataset exceeds that seen during training $T > 50$. Notably, we see that our stationary Laplace VRNN strongly degrades performance when not used at the beginning (red), most notably the full-covariance variation. This could be an indication that the Laplace approximated posterior is too wide, whereas the point-estimate is extremely sharp, causing samples from our method to be out-of-distribution for the predictive model. In contrast, this result also shows that our method with the Markovian assumption (green) performs at least as well as the baselines in all cases while also providing a Bayesian posterior for the RNN after training.

\subsection{Reinforcement Learning} 
To show that our method can perform uncertainty quantification while maintaining strong performance in reinforcement learning problems, we evaluated our method on a stochastic 5-armed bandit and a deterministic $5\times 5$ gridworld with sparse rewards. We tested all models using a variant of recurrent PPO \citep{schulman2017proximal} as a simple approximation for Eq.~\eqref{eq:rl_lb}. During training, we used a batch size of $B = 256$ task samples and a sequence length of $T=50$ interactions with the bandit and $T=100$ for the gridworld task. 

For the bandit, we generated training tasks by sampling reward probabilities from a Dirichlet prior using $\vec{\alpha}=0.2$. In this domain we only condition our policy on the sampled model hypotheses $z_t \sim q_\theta (Z_t | H_t)$, $a_t \sim \pi_\theta(Z=z_t)$ as is typical in Thompson sampling \cite{osband_psrl_2013}. This experiment also aimed to investigate robustness to model sampling noise. For the  gridworld \citep{zintgraf_varibad_2020} we sampled tasks by generating the agent's start- and goal-tile uniformly randomly across the grid. In contrast to the bandit problem, the gridworld agent modeled the task as a Bayes-adaptive Markov decision process \citep{duff_optimal_2002}. Meaning that the policy conditions on both the model samples $Z$ and the current state, $a_t \sim \pi_\theta(Z=z_t, S=s_t)$. 

\paragraph{Variations} As before, we test different assumptions for our Laplace VRNN agent's model from Eq.~\eqref{eq:meta_sampling_joint}. In this instance, we used a \emph{windowed} version of the stationary Laplace VRNN $q_\theta(Z_t | H_{t-w-1:t-1})$ for $w=10$ (red). I.e., this truncates the history up to a certain timestep to improve the runtime of the covariance computation, which otherwise scales in $\mathcal{O}(t)$-time. We also tested two variations of the Markovian $q_\theta(Z_t | H_{t-1}, Z_{t-1})$ factorization, which scaled in $\mathcal{O}(1)$-time. The proper-Markovian method (blue) sums the mean and covariance computed at each state-action $(S_t, A_t)$ whereas the second variant only sums the covariance (green).

\begin{figure*}[t!]
    \centering
    \includegraphics[width=\linewidth]{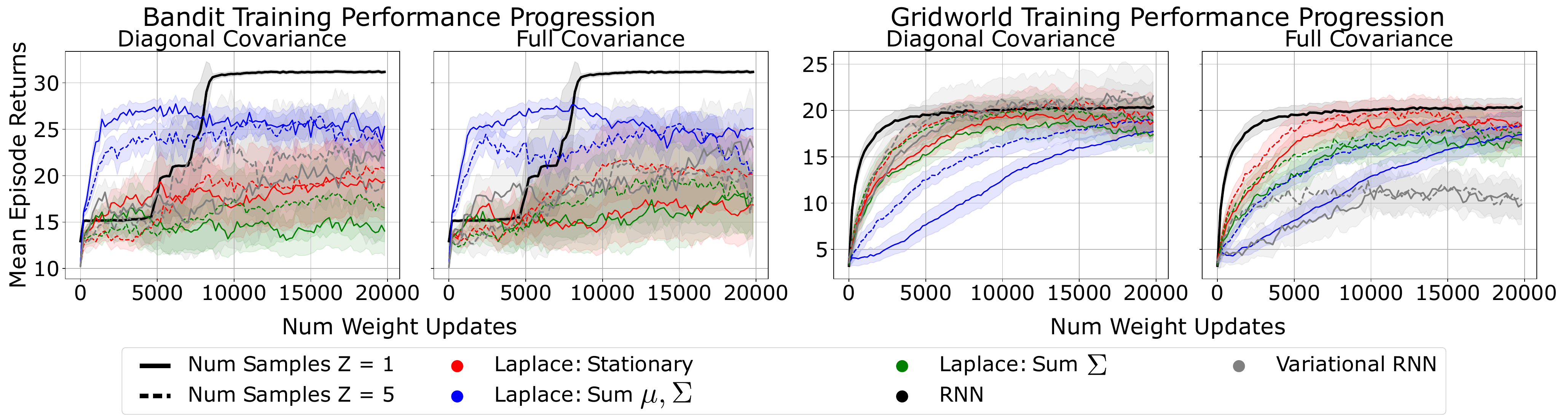}
    \caption{ Average return curves during training for the Reinforcement Learning experiments. The dashed and solid lines (Num $Z$) indicate the number of Monte-Carlo samples used for the posterior model during training inside the modified lower bound of Eq.~\eqref{eq:rl_lb}, to validate off-policy robustness in our loss. As expected, the deterministic RNN performs best, but our Laplace VRNN also outperforms the baseline VRNN. }
    \label{fig:rl_train}
\end{figure*}

\paragraph{Results} We visualize the evolution of estimated posterior statistics during testing in figure~\ref{fig:grid_test_post}, where we removed the ablation that sums both the mean and covariance (blue) to reduce clutter (this ablation performed in between the other two, see Appendix~\ref{ap:sup_grid_results}). We plotted the differential entropy of the posteriors $q_\theta$ and the consecutive KL-divergences $KL(q_\theta(Z_t | \dots) \Vert q_\theta( Z_{t-1} | \dots))$ between posteriors over time, to see whether their behavior matches that of the true posterior. For the true posterior, we expect the entropy to decrease gradually with more observations $T$, which indicates that our model concentrates around some true value. Furthermore, the KL-divergences should converge to zero.

As expected, we see that the posterior entropy of the Bayesian methods reliably goes down and the KL-divergences gets close to zero (left-column). We see a similar pattern when doing finetuning (middle-columns) except for our stationary Laplace variation (red). Most importantly, we see a strong effect of our accumulating covariances variation (green) when using a post-hoc posterior approximation. We see that the entropy steadily decreases while the KL-divergences between consecutive posteriors grows larger and larger. In contrast, the post-hoc VRNN (grey) and stationary Laplace (red) stay relatively constant, and are therefore non-informative. This result shows that the deterministic RNN does not converge to a stable hidden state when not explicitly regularized during training. This means that the learned estimator becomes more and more confident while not being consistent \cite{xiong_practical_2021}.

The average training returns for our model ablations are shown in Figure~\ref{fig:rl_train}. As argued in the introduction, we find that the deterministic method (black) has strong performance while also being the least noisy in the mean episode returns and being the fastest to train in terms of algorithm runtime. Interestingly, the proper-Markovian factorization (blue) of our Laplace VRNN showed faster learning in the Bandit up to a certain point, whereas it degraded training performance for the gridworld. All Bayesian methods tested on the bandit task were significantly noisy and only achieved sub-linear cumulative regret during test-time about $50\%$ over all experiment repetitions. On the grid task, all methods achieved sub-linear cumulative regret.

In summary, none of the ablations for our Bayesian methods degraded performance when applied after deterministic pre-training without finetuning (post-hoc). Our Laplace VRNN also typically performed on par with the VRNN in terms of returns. However, only our Markovian Laplace variation that summed the covariances (green) could produce insightful posterior statistics of the pre-trained model. This confirms all our research questions of whether our proposed Laplace VRNN, and what model assumptions, can give useful posterior statistics while not degrading performance.

\section{Conclusions}
We have described how the Laplace approximation can be applied to recurrent neural network models in a zero-shot meta-reinforcement learning context. Our method is a cheap transformation of an existing recurrent network to a Bayesian model. This enables trained agents to more accurately model their task-uncertainty which can be used to create better or more robust methods.

We tested our method on supervised and reinforcement learning tasks to investigate the utility of our approximation (the quality of posterior statistics), how it depends on model assumptions (ablations), and how it compares against variational inference or point-estimate baselines (no degradation in performance). Our results show that the proposed \emph{Laplace variational recurrent neural network} can reliably transform existing non-Bayesian models to produce a Bayesian posterior, at any point during training without modifying the model or training procedure. In contrast, variational inference requires altering the model architecture and training setup despite matching (or underperforming) compared to our method.

One limitation of our method is the computation of the Jacobians and possible restrictiveness of the Gaussian distribution. Furthermore, the RNN based agents required a variety of simplifications to our probabilistic model formulation (at least, for some of the tested configurations). Although the results matched expected behavior, and is consistent with prior work \cite{xiong_practical_2021, mikulik_metatrained_2020}, future work should investigate what the induced biases entail for our approximated posterior. Our method also does not fix the overconfidence of the non-Bayesian agents, but enables us to observe this effect. Extending our approach to enable statistically sound representation learning or to improve exploration through e.g., distribution-shift detection in meta-RL are a promising directions for further study \cite{daxberger_laplace_2021}.

\section*{Acknowledgements}
This work is supported by the AI4b.io program, a collaboration between TU Delft and dsm-firmenich, and is fully funded by dsm-firmenich and the RVO (Rijksdienst voor Ondernemend Nederland).

%%%%%%%%%%%%%%%%%%%%%%%%%%%%%%%%%%%%%%%%%%%%%%%%%%%%%%%%%%%%%%%%
%% NOTE: THIS MARKS THE END OF THE "MAIN TEXT"
%%%%%%%%%%%%%%%%%%%%%%%%%%%%%%%%%%%%%%%%%%%%%%%%%%%%%%%%%%%%%%%%

%%%%%%%%%%%%%%%%%%%%%%%%%%%%%%%%%%%%%%%%%%%%%%%%%%%%%%%%%%%%%%%%
%% Bibliography
%%%%%%%%%%%%%%%%%%%%%%%%%%%%%%%%%%%%%%%%%%%%%%%%%%%%%%%%%%%%%%%%
\bibliography{references}
\bibliographystyle{rlj}

%%%%%%%%%%%%%%%%%%%%%%%%%%%%%%%%%%%%%%%%%%%%%%%%%%%%%%%%%%%%%%%%
% AUTHOR: If your paper has no supplementary materials, you may 
%         comment out the line below, which creates the title for
%         the supplementary materials.
%%%%%%%%%%%%%%%%%%%%%%%%%%%%%%%%%%%%%%%%%%%%%%%%%%%%%%%%%%%%%%%%
\beginSupplementaryMaterials

%%%%%%%%%%%%%%%%%%%%%%%%%%%%%%%%%%%%%%%%%%%%%%%%%%%%%%%%%%%%%%%%
%% Appendices
%%%%%%%%%%%%%%%%%%%%%%%%%%%%%%%%%%%%%%%%%%%%%%%%%%%%%%%%%%%%%%%%

% \clearpage
\appendix
% \onecolumn

\section{Derivations}
\subsection{Lower Bounds} \label{ap:lowerbound}
In this section, we derive the two lower bounds used for model training in the main paper.

\subsubsection{Supervised Learning} \label{ap:lowerbound:supervised}
For the supervised learning domain we can derive an evidence lower bound on the data-marginal as a training objective for our neural network parameters in the following way. For all permutations of $H^{1:n} = \{X^i, Y^i\}_{i=1}^n$, where $X^i \in \mathcal{X}, Y^i \in \mathcal{Y}$, we have,
\begin{align}
    p(H^{1:n}) &= \int p(H^{1:n} | z) p(z) dz
    \\
    &= \int p(X^n, Y^n | z, H^{< n}) p(H^{<n} | z) p(z) dz 
    \\
    & = \int p(X^n, Y^n | z ) p(H^{< n} | z) p(z) dz
    \\
    &= p(H^{< n}) \int p(X^n, Y^n | z ) p(z | H^{< n} ) dz
\end{align}
if we complete the recursion for $p(H^{<n})$ and do importance sampling on the posterior with $q$, we get,
\begin{align}
    \ln p(H^{1:n}) &= \ln  \prod_{i=1}^n  \int p(X^i, Y^i | z^i) p(z^i | H^{1:i}) dz^{1:n}
    \\
    &= \sum_{i=1}^n \ln \int p(X^i, Y^i | z^i) \frac{q(z^i | H^{1:i})}{q(z^i | H^{1:i})} p(z^i | H^{1:i}) dz^{1:n}
    \\
    &\ge \sum_{i=1}^n \int q(z^i | H^{1:i}) \left[ \ln p(X^i, Y^i | z^i) + \ln \frac{p(z^i | H^{1:i})}{q(z^i | H^{1:i})} \right] dz^{1:n}
    \\
    & = \sum_{i=1}^n \mathbb{E}_{q(Z^i | H^{1:i})} \ln p(X^i, Y^i | Z^i) - \mathit{KL}(q(Z^i | H^{1:i}) \Vert p(Z^i | H^{1:i})),
\end{align}
which gives us the lower-bound for our approximate inference model when we use neural network parameters $\theta$ for the predictive and posterior models (overloading notation for $q_\phi$ in Eq.\eqref{eq:amortization}),
\begin{align}
    \mathcal{L}(\theta, H^{1:n}) &= 
    \sum_{i=1}^n 
    \mathbb{E}_{q_\theta(Z^i | H^{1:i})} 
    \underbrace{\ln p_\theta(X^i, Y^i | Z^i)}_{\text{Prediction Loss}}  
    \nonumber    \\
    &\hspace{8em}
    - \beta \cdot
    \underbrace{\mathit{KL}(q_\theta(Z^i | H^{1:i}) \Vert \mathtt{stop\_grad}[ q_\theta(Z^{i-1} | H^{1:i-1}))}_{\text{Complexity Penalty}},
\end{align}
where $\mathtt{stop\_grad}[\cdot]$ indicates a stop-gradient operation and $\mathcal{L}$ should be \emph{maximized} with respect to $\theta$. The hyperparameter $\beta \in \mathbb{R}_+$ accounts for differences in scaling. The stop-gradient is necessary so that the posterior at time $t$ does not depend on the future. During generation and training, we also assume a uniform prior over the inputs $p(X^i | Z) = \text{Unif}$. Of course, this is just one lower bound, the one we use in the main paper for the reinforcement learning tasks also assumes that each $z^i$ is sequentially dependent. In this case, the product would appear inside the integral in the first line for $\ln p(H^{1:n})$, this is only relevant for the Laplace VRNN that accumulates the mean and covariances. 

For simplicity, we only perform training on a single permutation of $H^n$ (i.e., canonical order), as in expectation all permutations are covered anyway and this provides training batches with more diverse examples. Unfortunately, when amortizing the computation of this lower bound with recurrent models it can be difficult to properly distill this permutation invariance of the data into the model. Using a recurrent model that linearly transforms the state, like a transformer \cite{vaswani2017attention, katharopoulos2020transformers} or general state space model \cite{bishop2007}, would prevent this. We leave this open for future work.

\subsubsection{Reinforcement Learning} \label{ap:lowerbound:rl}
Consider the joint distribution over environment traces $H^i = \{S_t, R_t, A_t\}_{t=1}^{T}$ and latent variables $Z$, we'll write episode indices (\textit{extra}-episodic) $i = 1 \dots, n$, in the superscript and time indices (\textit{inter}-episodic) $t = 1, \dots, T$, in the subscript,
\begin{align}
    p(H^{1:n}, Z^{1:n})
    &= 
    \prod_{i=1}^n 
    p(H^{i}, Z^{i} | H^{<i}, Z^{<i})
    \\
    &= \prod_{i=1}^n 
    \prod_{t=1}^{T_i} 
    p(
        S_t^{i}, R_{t}^{i}, A_t^{i}, Z_t^{i} |
        S^{i}_{<t}, R^{i}_{<t}, A^{i}_{<t}, Z^{i}_{<t}, Z^{<i}, H^{<i}
    )
    \\
    &= \prod_{i=1}^n
    \prod_{t=1}^{T_i} 
    p(
        S_t^{i}, R_{t}^{i}, A_t^{i} |
        Z^{i}_{t}, H^{i}_{<t}
    )
    p(
        Z_t^{i} |
        \underbrace{Z^{i}_{<t}, H^{i}_{<t}}_{{inter}},
        \underbrace{Z^{<i}, H^{<i}}_{{extra}}
    )
    \\
    &= 
    \prod_{i=1}^n
    \prod_{t=1}^{T_i} 
    \underbrace{p(
        R_{t}^{i} |
        S_t^{i}, A_t^{i}, Z^{i}_{t}
    )}_{\text{Reward Model}}
    \underbrace{\pi(A_t^{i} | S_t^{i}, Z_t^{i})}_{\text{Action Model}} 
    \nonumber \\
    & \hspace{8em}
    \underbrace{p(S_t^{i} | S_{t-1}^{i}, A_{t-1}^{i}, Z_t^{i})}_{\text{Transition Model}}
    \underbrace{p(
        Z_t^{i} |
        Z^{i}_{<t}, H^{i}_{<t},
        Z^{<i}, H^{<i}
    )}_{\text{Posterior Model}},
\end{align}
the lower-bound in Eq.~\ref{eq:metarl_joint} can then be easily derived by doing importance sampling on the posterior model with $q$, marginalizing out the latent variables, and assuming that $H^{i}$ is independent of all other variables given the latent-variable $Z^{i}$,
\begin{align}
    \ln p(H^{1:n}) &= \ln \int \prod_{i=1}^n 
    p(H^{i}, z^{i} | H^{<i}, z^{<i}) 
    dz^{1:n}    
    \\
    &= \ln \int \prod_{i=1}^n 
    p(H^{i}| z^{i}) \frac{q(z^{1:i} | H^{<i})}{q(z^{1:i} | H^{<i})} p(z^{i} | H^{<i}, z^{<i})
    dz^{1:n}
    \\
    & \ge \int \sum_{i=1}^n q(z^{1:i} | H^{1:i}) \left( \ln p(H^{i}| z^{i}) - \ln \frac{q(z^{1:i} | H^{<i})}{p(z^{i} | H^{<i}, z^{<i})}\right) dz^{1:n}
    \\
    & \propto \mathbb{E}_{q(Z^{1:n} | H^{1:n})} \sum^n_{i=1} \ln p(H^{i} | Z^{i}) - \mathit{KL}(q(Z^{i} | Z^{<i}, H^{<i}) \Vert p(Z^{i} | Z^{<i}, H^{<i})).
\end{align}
As stated in the paper, this lower bound only reproduces the data but does not maximize the rewards per se. So, using the control as inference framework \cite{levine2018reinforcement}, if we write the conditional that a given trajectory $H$ is desirable as $p(\mathcal{O} = 1 | H) \propto \exp (\sum^T_{t=1} R_t)$, then we can derive a lower bound for the sampling distribution for a reinforcement learning agent as (again simplifying notation for $\mathcal{L}$ compared to the main text),
\begin{align}
    \ln p(\mathcal{O} = 1) &= \ln \mathbb{E}_{q(H^{1:n}, Z^{1:n})} p(\mathcal{O} = 1 | H^{1:n}, Z^{1:n}) \frac{p(H^{1:n}, Z^{1:n})}{q(H^{1:n}, Z^{1:n})}
    \\
    &\ge \mathbb{E}_{q(H^{1:n}, Z^{1:n})} \ln p(\mathcal{O} = 1 | H^{1:n})  
 - \mathit{KL}(q(H^{1:n}, Z^{1:n}) \Vert p(H^{1:n}, Z^{1:n})),
 \\
 & = \mathcal{L}(q)
\end{align}
where we define the variational distribution $q(H^{1:n}, Z^{1:n})$ to factorize in exactly the same way as $p(H^{1:n}, Z^{1:n})$ where we fix the reward and transition models and then modify the action and posterior models. This choice of factorization cancels out the fixed terms in the KL-divergence, giving us the lower bound (Eq.\eqref{eq:rl_lb} in the main-text),
\begin{align}
    \mathcal{L}(q) &= \mathbb{E}_{q(H^{1:n}, Z^{1:n})} \sum_{i=1}^n \sum_{t=1}^{T_i} R_t^{i} - \mathit{KL}
    \left(
        q(A_t^{i} | S_{t}^{i}, Z_t^{i}) \Vert \pi(A_t^{i} | S_{t}^{i}, Z_t^{i})
    \right) 
    \\
    & \hspace{8em}
    - \mathit{KL}
    \left(
        q(Z_t^{i} | Z_{<t}^{i}, H_{<t}^{i}, Z^{<i}, H^{<i}) \Vert p(Z_{t}^{i} |  Z_{<t}^{i}, H_{<t}^{i}, Z^{<i}, H^{<i})
    \right).
\end{align}
To amortize computation of this lower bound and make this practical to compute, we parametrize the variational posterior $q_\theta(Z | \dots)$ and action model $\pi_\psi(A | \dots)$. To then finally give us a practical optimization objective for the parameters $\theta, \psi$, we substitute for the action model $\pi(A | \dots) = \pi_{\psi_{old}}$ and for the true posterior we simply use $q_\theta (Z | \dots)$ with a stop-gradient $\square$. We scale the KL-penalty with a hyperparameter $\beta \in \mathbb{R}_+$ to account for differences in scaling. This gives us our final lower-bound,
\begin{align}
    \mathcal{L}(\theta, \psi) &= \mathbb{E}_{q_\theta(H^{1:n}, Z^{1:n})} \sum_{i=1}^n \sum_{t=1}^{T_i} R_t^{i} - \mathit{KL}
    \left(
        \pi_\psi(A_t^{i} | S_{t}^{i}, Z_t^{i}) \Vert \pi_{\psi_{old}}(A_t^{i} | S_{t}^{i}, Z_t^{i})
    \right)  
    \\
    & \hspace{3em}
    - \beta \cdot \mathit{KL}
    \left(
        q_\theta(Z_t^{i} | Z_{<t}^{i}, H_{<t}^{i}, Z^{<i}, H^{<i}) \Vert \square q_{\theta}(Z_{t-1}^{i} |  Z_{<t-1}^{i}, H_{<t-1}^{i}, Z^{<i}, H^{<i})
    \right), \nonumber
\end{align}
which we can plug into the l.h.s. of the amortization objective in Eq.~\eqref{eq:amortization} enabling us to optimize our model parameters through sampling and end-to-end differentiation from the policy to the posterior. Although this is the objective we desire, we make further heuristic approximations through the use of the Proximal Policy Optimization algorithm \cite{schulman2017proximal}. This could roughly be interpreted as doing expectation-propagation \cite{bishop2007} on the policy (i.e., swapping the KL-arguments for the policies). 

Our lower bound is an extension of the one by \citet{abdolmaleki2018maximum}, for standard Markov decision processes, to include the latent variable posterior for use in memory-based meta-reinforcement learning \cite{duan_rl2_2016}. When using an RNN to approximate the posterior, the KL-penalty for $q_\theta(Z | \dots )$ is typically ignored since this is undefined for point-estimates. Doing this would recover the RL$^2$ objective in combination with MPO \cite{abdolmaleki2018maximum, duan_rl2_2016}.

\subsection{Posterior Factorization} \label{ap:factorization}

To choose an efficient factorization for our variational model we need the following result,

\begin{lemma} \label{lemma:iid_cond}
    We can write $p(Z | \{X_i\}^n_{i=1}) = \frac{1}{p(Z)^{n-1}} \prod_{i=1}^n p(Z | X_i)$ iff $X_i \indep X_j, \forall j \ne i$.
\end{lemma}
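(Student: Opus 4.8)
The plan is to prove both implications by a direct computation with Bayes' rule, reducing the claimed identity about the posterior $p(Z\mid X_{1:n})$ to an equivalent statement about how the likelihood of the $X_i$ factorizes, and then reading independence off from that factorization. The whole argument is elementary; the only care needed is in normalization and in pinning down which notion of independence is in play.

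First I would rewrite both sides with Bayes' rule. For the product on the right, $p(Z\mid X_i)=p(X_i\mid Z)\,p(Z)/p(X_i)$, so
\[
\frac{1}{p(Z)^{n-1}}\prod_{i=1}^{n}p(Z\mid X_i)
=\frac{p(Z)\,\prod_{i=1}^{n}p(X_i\mid Z)}{\prod_{i=1}^{n}p(X_i)} .
\]
For the left side, $p(Z\mid X_{1:n})=p(X_{1:n}\mid Z)\,p(Z)/p(X_{1:n})$. Equating the two expressions and cancelling the common factor $p(Z)$ shows that, on the support, the lemma's identity is equivalent to
\[
p(X_{1:n}\mid Z)\,\prod_{i=1}^{n}p(X_i)=p(X_{1:n})\,\prod_{i=1}^{n}p(X_i\mid Z).
\]
For the ``$\Leftarrow$'' direction I would then substitute the two factorizations implied by the $X_i$ being independent, namely $p(X_{1:n})=\prod_i p(X_i)$ and $p(X_{1:n}\mid Z)=\prod_i p(X_i\mid Z)$, which turns the displayed equality into a tautology.

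For the ``$\Rightarrow$'' direction I would run this backwards: the displayed equality says the ratio $p(X_{1:n}\mid Z)/\prod_i p(X_i\mid Z)$ does not depend on $Z$; integrating the equality against $p(Z)\,dZ$ and using $\int p(X_{1:n}\mid Z)\,p(Z)\,dZ=p(X_{1:n})$ pins down that ratio, and chasing the result back through Bayes' rule recovers the marginal and conditional factorizations of the $X_i$, i.e. their independence. The step I expect to be the real work — and the one to state carefully — is precisely this reverse direction: one has to ensure that the single scalar identity genuinely forces (mutual) independence rather than only a weaker ``$Z$-free ratio'' condition, which is where the normalization argument does the heavy lifting. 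Along the way one should restrict to the support to avoid dividing by densities that vanish, and be explicit about whether pairwise or mutual independence is intended, since the product-of-factors identity really needs the mutual version.
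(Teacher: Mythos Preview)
Your ``if'' direction is essentially the paper's argument, merely reorganized: the paper applies Bayes' rule to $p(Z\mid X_{1:n})$, factorizes the likelihood under independence, and applies Bayes' rule again to each $p(X_i\mid Z)$, arriving directly at the product form; you instead first reduce the identity to the equivalent equation $p(X_{1:n}\mid Z)\prod_i p(X_i)=p(X_{1:n})\prod_i p(X_i\mid Z)$ and then verify it. Either way works. Note that both you and the paper silently invoke \emph{two} factorizations --- the marginal one $p(X_{1:n})=\prod_i p(X_i)$ and the conditional one $p(X_{1:n}\mid Z)=\prod_i p(X_i\mid Z)$ --- and neither implies the other in general, so the hypothesis ``$X_i\indep X_j$'' has to be read as encompassing both.

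Your ``only if'' direction, however, has a real gap, and in fact that direction cannot be proved as the lemma is stated. The paper does not attempt it either: its proof is a one-way derivation assuming independence. Your normalization sketch does not close the gap. From your equivalent identity one gets $p(X_{1:n}\mid Z)=r(X)\prod_i p(X_i\mid Z)$ with $r(X)=p(X_{1:n})/\prod_i p(X_i)$; integrating against $p(Z)$ yields only $\prod_i p(X_i)=\int\prod_i p(X_i\mid Z)\,p(Z)\,dZ$, which is not $\prod_i p(X_i)=p(X_{1:n})$. Concretely, take $Z$ degenerate (a point mass). Then $p(Z\mid\cdot)=p(Z)$ identically, both sides of the lemma's identity equal $p(Z)$ for \emph{every} joint law on the $X_i$, and the identity holds while the $X_i$ may be arbitrarily dependent. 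So the ``iff'' is false without extra non-degeneracy assumptions on $Z$; for the paper's purposes only the ``if'' direction is used, and your treatment of that direction matches the paper's.
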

\begin{proof}
    This result can be shown by applying Bayes rule then factorizing each $X_i$ to be independent of $X_j, \forall j \ne i$ and then applying Bayes rule again,
    \begin{align}
        p(Z | \{X_i\}^n_{i=1}) &= \frac{p(X_1, X_2, \dots, X_n | Z) p(Z)}{p(X_1, X_2, \dots, X_n)} 
        \\
        &= \frac{p(Z)}{\prod_{i=1}^n p(X_i) }
         \prod_{i=1}^n p(X_i | Z) \tag*{(Independence)}
        \\
        &= \frac{p(Z)}{\prod_{i=1}^n p(X_i) }
        \left[ \prod_{i=1}^n p(Z | X_i) \frac{p(X_i)}{p(Z)} \right]
        \\
        &= \frac{p(Z)}{\prod_{i=1}^n p(X_i) }
        \left[ \frac{\prod_{i=1}^n p(X_i)}{p(Z)^{n}} \prod_{i=1}^n p(Z | X_i)  \right]
        \\ &= \frac{1}{p(Z)^{n - 1}}
        \prod_{i=1}^n p(Z | X_i)
    \end{align}
\end{proof}

\subsection{Laplace Variational Recurrent Model}  \label{ap:laplace:derivation}

\begin{prop} \label{prop:laplace}
    Given a mean-field assumption on the data for our posterior $q_\theta$ (Lemma~\ref{lemma:iid_cond}). The second order Taylor Expansion of $\ln q_\theta (Z_t | H_{<t}, \phi_t)$ linearized at $\phi_t$, where $\phi_t=\phi^*$ is a local maximizer of $q_\theta$ and occupies the same space as $Z_t$, yields the following Gaussian distribution,
    \begin{align}
        q_\theta (Z_t| H_{<t}, \phi_t) = \mathcal{N} 
        \left(
            Z_t;
            \mu = \phi_t, 
            \Sigma = (- \nabla^2_\phi \ln q_\theta (Z_t | H_{<t}, \phi) |_{\phi=\phi_t})^{-1}
        \right).
    \end{align}
\end{prop}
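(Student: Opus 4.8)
The plan is to derive the claimed Gaussian by carrying out the Laplace approximation in the standard way: write the second-order Taylor expansion of the log-posterior around the stationary point $\phi_t$, observe that the zeroth-order term vanishes under renormalization, the first-order term vanishes because $\phi_t$ is a local maximizer, and the remaining quadratic term yields (after exponentiation) the kernel of a Gaussian whose precision matrix is the negative Hessian. First I would set $h_\theta(\phi) \overset{\Delta}{=} \ln q_\theta(Z_t \mid H_{<t}, \phi)$, treating $Z_t$ and $H_{<t}$ as fixed, and write out
\begin{align}
    h_\theta(\phi) \approx \hat{h}_\theta(\phi) = h_\theta(\phi_t) + (\phi - \phi_t)^\top \nabla_\phi h_\theta(\phi)\big|_{\phi=\phi_t} + \tfrac{1}{2}(\phi-\phi_t)^\top \nabla^2_\phi h_\theta(\phi)\big|_{\phi=\phi_t} (\phi - \phi_t). \nonumber
\end{align}

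The key step is the interpretation of the stationarity condition. Because $\phi_t = \phi^*$ is assumed to be a local maximizer of $q_\theta(Z_t \mid H_{<t}, \phi)$ (equivalently of $h_\theta$, since $\ln$ is monotone), we have $\nabla_\phi h_\theta(\phi)|_{\phi=\phi_t} = 0$, so the linear term drops. Here I would lean on the role $\phi_t = f_\theta(H_{<t})$ plays as a \emph{maximum a posteriori} estimate — the amortization equality in Eq.~\eqref{eq:amortization} and the stated assumption that $\phi_t$ is MAP — and on the hypothesis that $\phi$ "occupies the same space as $Z_t$", which is exactly what licenses re-reading the Taylor variable $\phi$ as the random variable $Z_t$ in the final expression. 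Then $\hat{h}_\theta(\phi) = \text{const} - \tfrac{1}{2}(\phi - \phi_t)^\top \Lambda_t (\phi - \phi_t)$ with $\Lambda_t = -\nabla^2_\phi h_\theta(\phi)|_{\phi=\phi_t}$, which is positive semidefinite at a maximizer.

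Finally I would exponentiate and renormalize: $\exp \hat h_\theta(\phi) \propto \exp(-\tfrac12 (\phi-\phi_t)^\top \Lambda_t (\phi-\phi_t))$, which is the unnormalized density of $\mathcal{N}(\phi_t, \Lambda_t^{-1})$; dividing by $\int \exp \hat h_\theta(z)\,dz$ gives exactly the normalized Gaussian, and identifying the Taylor variable with $Z_t$ yields the stated $q_\theta(Z_t \mid H_{<t}, \phi_t) = \mathcal{N}(Z_t; \mu=\phi_t, \Sigma = (-\nabla^2_\phi \ln q_\theta(Z_t \mid H_{<t},\phi)|_{\phi=\phi_t})^{-1})$. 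The mean-field assumption (Lemma~\ref{lemma:iid_cond}) is what guarantees $h_\theta$ has the additive form $(2-t)\ln q_\theta(Z_t\mid\phi_t) + \sum_i \ln q_\theta(Z_t\mid x_i,\phi_t)$ used implicitly, so differentiation passes termwise through the sum; I would note this but not belabor it.

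The main obstacle is conceptual rather than computational: making rigorous the sleight-of-hand whereby the Taylor expansion is performed in the \emph{parameter} variable $\phi$ but the result is asserted as a density in the \emph{latent} variable $Z_t$. This works only because of the deliberate choice that $\phi$ and $Z_t$ live in the same space together with the MAP assumption $\phi_t = \arg\max_\phi q_\theta(Z_t\mid H_{<t},\phi)$ — and strictly this is an approximation, since at finite/suboptimal $\theta$ the gradient need not vanish exactly, which (as the paper itself remarks) introduces a first-order error. I would flag that the stated proposition takes the MAP/stationarity condition as a hypothesis, so within that hypothesis the derivation is exact; the honest caveat about when the hypothesis holds is already discussed in the surrounding text.
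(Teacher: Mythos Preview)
Your proposal is correct and follows essentially the same route as the paper's own proof: define the log-posterior $h_\theta$, take its second-order Taylor expansion around $\phi_t$, drop the first-order term by the local-maximum hypothesis, then exponentiate and renormalize to recover the Gaussian. The paper additionally writes out the mean-field factorization of $h_\theta$ explicitly at the start (via Lemma~\ref{lemma:iid_cond}) rather than mentioning it at the end, and it too remarks after the proof on the $\phi$-versus-$Z_t$ identification you flagged---so your treatment matches both the mechanics and the caveats.
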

\begin{proof}
    Reiterating the results from the main paper, we choose to factorize our model as,
    \begin{align}
        q_\theta(Z_{t} | H_{<t}, \phi_{t}) 
        &= \frac{1}{q_\theta(Z_{t} | \phi_{t})^{t-2}} \prod_{i=1}^{t-1} q_\theta(Z_{t} | S_i, R_i, A_i, \phi_{t}), \tag*{(Lemma~\ref{lemma:iid_cond})}
        \\
        &= \exp \left[ (2 - t) \ln q_\theta(Z_{t} | \phi_{t}) + \sum_{i=1}^{t-1} \ln q_\theta(Z_{t} | S_i, R_i, A_i, \phi_{t})  \right]
        \\
        &= \exp h_\theta(Z_{t}; H_{<t}, \phi_{t}),
    \end{align}
    where our aim is to make a local approximation to $h_\theta$, the rest of the proof follows Appendix~A from \citet{daxberger_laplace_2021}.
    
    The second order Taylor expansion of $h_\theta(Z_{t}; H_{<t}, \phi_{t})$ where $\phi_t$ (locally) maximizes $h_\theta$ keeping all other arguments fixed, gives us,
    \begin{align}
        \hat{h}_\theta(Z_t; H_{<t}, \phi) &= h_\theta(Z_t; H_{<t}, \phi_t) + \underbrace{\nabla_\phi h_\theta |_{\phi=\phi_t} (\phi - \phi_{t})}_{ = \: 0} + \: \frac{1}{2} (\phi - \phi_t)^\top \nabla_\phi^2 h_\theta |_{\phi=\phi_t} (\phi - \phi_{t}),
        \nonumber \\
        &= h_\theta(Z_t; H_{<t}, \phi_t) - \frac{1}{2} (\phi - \phi_t)^\top \nabla_\phi^2 h_\theta |_{\phi=\phi_t} (\phi - \phi_{t}),
    \end{align}
    dropping the function arguments to $h_\theta$ for the higher-order terms for brevity. When exponentiating $\hat{h}_\theta$ and renormalizing it to integrate to $1$, it is easy to show that we recover a Gaussian,
    \begin{align}
        h_\theta(Z_t; H_{<t}, \phi_t) &\approx \frac{1}{\int_\mathbb{R} \exp \hat{h}_\theta(Z_t; H_{<t}, \phi') d\phi' } \exp \hat{h}_\theta(Z_t; H_{<t}, \phi) 
        \\
        &= 
        \frac{
            \exp 
            \{
                h_\theta(Z_t; H_{<t}, \phi_t) - \frac{1}{2} (\phi - \phi_{t})^\top (- \nabla_\phi^2 h_\theta |_{\phi=\phi_t}) (\phi - \phi_t)
            \}
        }
        {
            \int_\mathbb{R} \exp
            \{
                h_\theta(Z_t; H_{<t}, \phi_t) - \frac{1}{2} (\phi' - \phi_t)^\top (- \nabla_\phi^2 h_\theta |_{\phi=\phi_t}) (\phi' - \phi_t) 
            \}
            d\phi'
        }
        \\[8pt]
        &= 
        \frac{
            \exp 
            \{
                -\frac{1}{2} (\phi - \phi_{t})^\top (- \nabla_\phi^2 h_\theta |_{\phi=\phi_t}) (\phi - \phi_t)
            \}
        }
        {
            \int_\mathbb{R} \exp
            \{
                - \frac{1}{2} (\phi' - \phi_t)^\top (- \nabla_\phi^2 h_\theta |_{\phi=\phi_t}) (\phi' - \phi_t) 
            \}
            d\phi'
        }
        \\[8pt]
        &= \mathcal{N}\left(
            Z; \mu = \phi_t, \Sigma = (- \nabla^2_\phi \ln q_\theta (Z_t | H_{<t}, \phi) |_{\phi=\phi_t})^{-1}
        \right).
    \end{align}
\end{proof}
Observe that the above result technically gives us a distribution over $\phi$, and misleadingly not $Z$. However, this is simply a consequence of our notation and distributional assumptions for $q$. In practice, $\phi$ is computed by our representation model (recurrent neural network), and our probabilistic model conflates the learned representation $\phi$ as the mode of the distribution for the latent distribution $Z$.

\begin{prop} \label{ap:prop:rnn_outerjac}
    If we choose $q_\theta(Z_t | S_i, R_i, A_i, \phi) = \mathcal{N}(Z_t; \mu = f_\theta(S_i, R_i, A_i; \phi), \Sigma=I_n)$ and $q_\theta(Z_t | \phi) = \mathcal{N}(Z_t; \mu =\phi, \Sigma = \sigma^2_\phi I_n)$ where we take the limit for $\sigma^2_\phi$ to infinity, then our Laplace approximated posterior (Proposition~\ref{prop:laplace}) has an inverse covariance that is computed as,
    \begin{align}
        \Sigma^{-1}_t = \sum_{i=1}^{t-1} (\nabla_\phi f_\theta(S_i, R_i, A_i; \phi) |_{\phi=\phi_t}) (\nabla_\phi f_\theta(S_i, R_i, A_i; \phi) |_{\phi=\phi_t})^\top.
    \end{align} 
\end{prop}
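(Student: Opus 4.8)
The plan is to substitute the assumed Gaussian densities into the Hessian formula supplied by Proposition~\ref{prop:laplace} and to keep careful track of which terms survive the uninformative-prior limit $\sigma_\phi^2 \to \infty$. By Proposition~\ref{prop:laplace}, the inverse covariance is $\Sigma_t^{-1} = -\nabla^2_\phi \ln q_\theta(Z_t \mid H_{<t}, \phi)\big|_{\phi=\phi_t}$, and the mean-field factorization from Lemma~\ref{lemma:iid_cond} (with $x_i = (S_i,R_i,A_i)$ and $t-1$ data points) lets us write $\ln q_\theta(Z_t \mid H_{<t}, \phi) = (2-t)\ln q_\theta(Z_t\mid\phi) + \sum_{i=1}^{t-1}\ln q_\theta(Z_t \mid x_i, \phi)$. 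Hence $\Sigma_t^{-1}$ splits into a prior contribution plus a sum of $t-1$ likelihood contributions, and it suffices to compute the $\phi$-Hessian of each summand and negate.

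First I would dispatch the prior term. Since $\ln q_\theta(Z_t\mid\phi) = -\tfrac{1}{2\sigma_\phi^2}\|Z_t - \phi\|^2 + \text{const}$, its $\phi$-Hessian is $-\sigma_\phi^{-2} I_n$, so the prior contributes a term of order $\sigma_\phi^{-2}$ (namely $-\tfrac{t-2}{\sigma_\phi^2}I_n$) to $\Sigma_t^{-1}$, which vanishes in the limit $\sigma_\phi^2\to\infty$; this is exactly the role of the uninformative-prior assumption.

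Next I would differentiate each likelihood term. Writing $\ln q_\theta(Z_t\mid x_i,\phi) = -\tfrac12\|Z_t - f_\theta(x_i;\phi)\|^2 + \text{const}$ and applying the chain rule twice in $\phi$ yields, in standard Gauss--Newton form, $\nabla^2_\phi \ln q_\theta(Z_t\mid x_i,\phi) = -(\nabla_\phi f_\theta(x_i;\phi))(\nabla_\phi f_\theta(x_i;\phi))^\top + \sum_k (Z_{t,k} - f_{\theta,k}(x_i;\phi))\,\nabla^2_\phi f_{\theta,k}(x_i;\phi)$. Evaluating at $\phi=\phi_t$, negating, and summing over $i$ produces $\sum_{i=1}^{t-1}(\nabla_\phi f_\theta(x_i;\phi)|_{\phi_t})(\nabla_\phi f_\theta(x_i;\phi)|_{\phi_t})^\top$ plus a residual-weighted curvature term $-\sum_i\sum_k (Z_{t,k}-f_{\theta,k}(x_i;\phi_t))\nabla^2_\phi f_{\theta,k}(x_i;\phi_t)$.

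I expect this residual-weighted term to be the main obstacle: it does not vanish identically (the MAP condition only forces the residuals to sum to zero, not each to be zero), so it must be discarded exactly as in the generalized Gauss--Newton / linearized-Laplace treatment (cf. \citet{daxberger_laplace_2021, immer2021improving}). Equivalently, one reads the ``linearized at $\phi_t$'' step of Proposition~\ref{prop:laplace} as a first-order linearization of the map $\phi \mapsto f_\theta(x_i;\phi)$, which makes each likelihood exactly quadratic in $\phi$ and removes the $\nabla^2_\phi f_\theta$ contribution outright. With that term dropped and the prior term annihilated by the limit, the two displayed expressions coincide, establishing the claim; the remaining remark is the same notational caveat as after Proposition~\ref{prop:laplace}, that the Gaussian is formally over $\phi$ but is identified with the distribution over $Z_t$ since the RNN output $\phi_t$ plays the role of its mode.
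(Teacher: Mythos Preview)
Your approach mirrors the paper's proof: both expand the log-posterior via the mean-field factorization, differentiate each Gaussian summand twice in $\phi$, send $\sigma_\phi^2\to\infty$ to kill the prior contribution $\tfrac{t-2}{\sigma_\phi^2}I_n$, and are left with the Jacobian outer-product plus a residual-weighted curvature term. The one point of difference is how that residual term is dispatched. The paper writes it as $\sum_i \nabla^2_\phi (f_\theta)_i\,\nabla_\mu\ln\mathcal{N}(Z_t;(f_\theta)_i,I_n)$ and simply labels it ``$=0$, when $\phi_t=\phi^*$ (Laplace approx.)'', i.e.\ it is discarded by fiat under the Laplace heading rather than derived from the first-order stationarity condition. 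You are right to flag that the MAP condition alone does not annihilate this term; your appeal to the generalized Gauss--Newton / linearized-Laplace reading (linearize $\phi\mapsto f_\theta(x_i;\phi)$ so the $\nabla^2_\phi f_\theta$ contributions vanish identically) is exactly the precise justification that underlies the paper's shorthand, and matches the references you cite. So your argument is the same as the paper's, with the key approximation made explicit rather than bundled into the phrase ``Laplace approx.''.
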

\begin{proof}
    To see this we only need to write down the Hessian under a local maximum assumption of $\phi=\phi^*$ (Laplace approximation) and substitute the chosen Gaussian distributions in for all terms. 
    \begin{align}
        \nabla^2_\phi \ln q_\theta(Z_t | H_{<t}, \phi)
     &= \nabla^2_\phi \left[ (2 - t) \ln q_\theta (Z_t | \phi) + \sum_{i=1}^{t-1} \ln q_\theta (Z_t | S_i, R_i, A_i, \phi) \right]
     \\
     &= \nabla^2_\phi \left[ (2 - t) \ln \mathcal{N}(Z_t ; \phi, \sigma^2_\phi I_n) + \sum_{i=1}^{t-1} \ln \mathcal{N}(Z_t; f_\theta(S_i, R_i, A_i; \phi), I_n) \right]
     \\
     &= 
     \frac{t-2}{\sigma^2_\phi} I_n + 
     \sum_{i=1}^{t-1} (J_\phi)_i 
     \underbrace{\left( \nabla^2_\mu \ln \mathcal{N}(Z_t; (f_\theta)_i, I_n) \right)}_{= -1} 
     (J_\phi)_i^\top
     \nonumber \\
     & \hspace{6em} + 
     \underbrace{\sum_{i=1}^{t-1}
     \nabla^2_\phi (f_\theta)_i |_{\phi=\phi_t}
     \nabla_\mu \ln \mathcal{N}(Z_t; (f_\theta)_i, I_n)}_{=0, \text{ when } \phi_t = \phi^* \text{ (Laplace approx.)}}
     \\[0pt]
     &= \frac{t-2}{\sigma^2_\phi} I_n - \sum_{i=1}^{t-1} (J_\phi)_i (J_\phi)_i^\top,
    \end{align}
    where we abbreviate $(J_\phi)_i = \nabla_\phi f_\theta(S_i, R_i, A_i; \phi)$ and $(f_\theta)_i = f_\theta(S_i, R_i, A_i; \phi)$. Then, in the case of using an infinite variance Gaussian for the prior, $\lim_{\sigma^2_\phi \rightarrow \infty} \nabla^2_\phi \ln q_\theta(Z_t | H_{<t}, \phi)$, we get,
    \begin{align}
        \Sigma_t^{-1} &= 
        \left(
            -\nabla^2_\phi \ln q_\theta(Z_t | H_{<t}, \phi)|_{\phi=\phi_t}
        \right)
        = 
        \sum_{i=1}^{t-1} (J_\phi)_i (J_\phi)_i^\top
        \\
        &= 
        \sum_{i=1}^{t-1} (\nabla_\phi f_\theta(S_i, R_i, A_i; \phi) |_{\phi=\phi_t}) (\nabla_\phi f_\theta(S_i, R_i, A_i; \phi) |_{\phi=\phi_t})^\top.
    \end{align}
\end{proof}

\subsubsection{Final Model} \label{ap:laplace:final}
To complete our fully general Laplace approximated variational recurrent neural network, we need to define the posterior over \emph{all} previous latent variables, $q_\theta(Z_t | Z_{<t}, H_{<t})$. We can easily plug this dependency in for our Laplace approximation from Prop.~\ref{prop:laplace} by assuming that each consecutive posterior has an additive effect on all future posteriors (i.e., a Gaussian convolution),
\begin{align}
    q_\theta(Z_t | Z_{<t}, H_{<t}) = \mathcal{N}
    \left(
        Z_t; 
        \mu_t = \phi_t + \sum_{i=1}^{t-1} Z_i, 
        \Lambda_t = -\nabla_\phi^2 \ln q_\theta (Z_t | Z_{<t}, H_{<t}, \phi_t)|_{\phi=\phi_t})
    \right).
\end{align}
As long as we do not condition $\phi_t$ on $Z_{<t}$, the dependency on past latent variables becomes a constant w.r.t. $\nabla^2_\phi$, making the covariance independent of these terms. This is in contrast to a recurrent state-space model architecture which does condition on these values \cite{hafner2020dream}, however, our choice permits an analytical solution. It is a known result that the expected posterior then becomes another Gaussian with the means and inverse covariances summed \cite{bromiley2003products},
\begin{align}
    &\phantom{=} \mathbb{E}_{q_\theta(Z_{<t} | H_{<t})} q_\theta (Z_t | Z_{<t},  H_{<t}) \nonumber \\
    &= 
    \mathcal{N} \left(
        \mu_t=\phi_t + \sum_{i=1}^{t-1} \mu_i, \Lambda_t= -\nabla^2_\phi \ln q_\theta( Z | H_{<t}, \phi) |_{\phi = \phi_t} + \sum_{i=1}^{t-1} \Lambda_i
    \right).
\end{align}
 This particular form has also been described by \citet{ritter2018online} in the context of continual learning. It is easy to accumulate the mean and covariances terms sequentially over $t$. Depending on the assumptions one makes on the data-generating distribution, one can sum over fewer terms to make the calculation more efficient. The ones we ran experiments for in the main paper include:
 \begin{enumerate}
     \item \textbf{Stationary Posterior}: $q_\theta(Z_t | H_{<t})$. Full summation over $H_{<t}$ in the calculation of $\Lambda_t$. No summation over previous posteriors $Z_{<t}$.
     \item \textbf{Markov Chain Posterior}: $q_\theta(Z_t | H_{t-1}, Z_{t-1})$. The inverse covariance is calculated only using the most recent observation $t-1$. Only the previous posterior mean and precision are summed with the current mean and precision.
     \item \textbf{Windowed Markov Chain Posterior}: $q_\theta(Z_t | H_{k:t-1}, Z_{t-1})$. The inverse covariance is calculated using the $k$ most recent observations. Only the previous posterior mean and precision are summed with the current mean and precision.
 \end{enumerate}
However, these are all simplified models, whereas the variational recurrent model \cite{chung2016recurrent} we discuss in the main paper is fully general.

\paragraph{On summation of the means} The current formulation for the probabilistic model mismatches with typical recurrent neural network (RNN) architectures. Typically, RNN models do not aggregate their past hidden states for the outputs. However, as discussed in the main paper, we strictly required this simplification for the Markov chain (non-stationary) factorizations, since this was the only approach that would leave the base RNN architecture untouched. Despite that, we can make two heuristic arguments that can partially explain the effect of summing all the previous covariances but omit summation of the mean: 
\begin{itemize}
    \item \textbf{Representation Learning:} When using covariance summation throughout training (no post-hoc posterior, or finetuning of deterministic baselines), an RNN can learn to represent the hidden state aggregation implicitly within the state-update. 
    \item \textbf{Exponential Tilting:} Omitting the mean summation can be interpreted as a form of exponential tilting of Gaussian distributions. This is an importance-sampling technique for rare-event simulation \cite{asmussen_stochastic_2007}. For the correctly chosen tilting parameter, this can have the same effect as subtracting all previous means (at the cost of some bias).
\end{itemize}

\renewcommand{\arraystretch}{1.5}
\section{Implementation Details}  \label{ap:hyperparams}

Our code is available at \url{https://github.com/joeryjoery/laplace-vrnn}.

\subsection{Model Architecture and Optimization} \label{ap:model}
Following the main text we can define our model according to the following components,
\begin{alignat*}{3}
    &\text{Embedding} \hspace{4em} && S^g_t, A_t^g, R_t^g &&= g_\theta(S_t), h_\theta(A_t), w_\theta(R_t),
    \\
    &\text{Recurrent Model} \hspace{4em} && \phi_{t+1} &&= f_\theta(S^g_t, A^g_t, R^g_t; \phi_{t}),
    \\
    &\text{Posterior Model} \hspace{4em} && Z_t &&\sim q_\theta(Z_t | H_{<t}, \phi_t),
    \\
    &\text{Reward Model} \hspace{4em} && \hat{R} &&\sim p_\theta(\hat{R} | Z_t, A^g, S_t^g),
    \\
    &\text{Action Model} \hspace{4em} && A_t &&\sim \pi_\theta(A_t | Z_t, S_t^g),
\end{alignat*}
note that we do not train an action model for the supervised experiments, and we do not use the reward model in the reinforcement learning experiments (i.e., we do not use it to select actions or to do planning). In the supervised case, the reward model simply learns a direct function prediction $\hat{R}$ where the state can be considered stationary. For brevity, we denoted the full set of parameters as $\theta$, in practice each component has its parameters but we jointly optimize for these using end-to-end differentiation. 
 
For the embedding model we used a multi-layered perceptron (MLP) \citep{cybenko_approximation_1989} of two hidden layers of width 256 nodes, we used leaky-ReLU for the activation. The action and predictive model used a three hidden layer MLP with sizes (256, 256, 64), also with leaky-ReLU. For the recurrent model, we use a long-short-term memory module \cite{hochreiter_lstm_1997} with $n=128$ hidden nodes. We projected the outputs (not the carried state) of the LSTM to a smaller $n=64$ feature output vector with a learned affine transform (i.e., a 1-layer MLP without an activation). 

For discrete environments, the action model predicted the $n$ logits for the full action space. For the regression task, the reward model outputs a Gaussian with a learned mean and input-independent variance. 

As noted in the main paper, we apply stop-gradients on the prior term appearing in the KL-divergences (Eq.~\ref{eq:metarl_joint} and Eq.~\ref{eq:rl_lb}). Doing this prevents past posteriors from fitting to data beyond their respective timestep, while still constraining our current posterior to not deviate too much from these terms. We also apply stop-gradients to the \emph{past} mean and covariance terms when \emph{accumulating} these terms. This is only relevant for the Laplace VRNN ablations. The reasoning for this is the same as for the stop-gradient in the KL term, we want to use the past means and covariances as constants at timestep $t$, and not as another learnable parameter. As a side note, we also found that doing this sped up training orders of magnitude.

We developed everything discussed in this paper in Jax v.0.4.23 \cite{jax2018github}. For neural network design, we used the Flax library v0.8.1 \cite{deepmind2020jax}. For optimization, we used the Adamw optimizer \cite{loshchilov2019decoupled} implemented in Optax with learning-rate $= 10^{-3}$, weight-decay = $10^{-6}$,  and the rest on default settings at version v0.1.7. We used gradient-clipping to have a max global norm of 1.0 and a maximum individual gradient of [-5.0, 5.0]. We used our implementation for the PPO algorithm with help from the RLax library to compute the generalized advantage estimators.

\subsection{Variational Posterior Baseline} \label{ap:vrnn}
As discussed in the main paper, typically in meta-reinforcement learning we see that the posterior $q_\theta(Z_t | H_{<t})$ is modeled with a point-estimate and we propose to use the Laplace approximation to convert this point-estimate into a Gaussian distribution. Instead of computing the covariance matrix using the Laplace approximation, we can also directly predict this covariance with another neural network.

So, our variational recurrent neural network (VRNN) baseline predicted the $m(m-1)$ lower triangular elements of the $m\times m$ dimensional covariance matrix (or just $m$ for the diagonal ablations) and the $m$ dimensional mean. We opted for a spectral decomposition $\Sigma = USV$ where $S$ is diagonal and $U$ was predicted through the Cayley map starting from a skew-symmetric matrix. First, we compute $\phi_t$ with our RNN, then we project $\phi_t$ to $\mu_t, S_t, L_t$ with a linear layer such that $\mu_t \in \mathbb{R}^m$,  $S_t \in \mathbb{R}^{m \times m}$ is diagonal and $L_t \in \mathbb{R}^{m \times m}$ is lower-triangular, we then compute $U_t = (I - A_t)(I + A_t)^{-1}$ where $A_t= L_t - L_t^\top$ to get an orthogonal eigenbasis. We then construct the Gaussian as,
\begin{align}
    q_\theta(Z_t | H_{<t}, \phi_t) = \mathcal{N}(Z_t; \mu = \mu_t, \Sigma = U_t (\exp{S_t}) U_t^\top ),
\end{align}
this representation also made matrix inversion incredibly easy as $(U e^S V)^{-1} = U e^{-S} V$ since $U=V^\top$ are orthogonal.

The reason for using the spectral decomposition was that we did not achieve stable training using any variant of the Cholesky factorization for the covariance matrix ($LU$ or $LDU$ decomposition), even if explicitly transforming the eigenvalues to be positive. We suspect that the spectral parameterization trained more stably than the LU or LDL parameterization as the basis matrices $U$ or $V$ are constrained to the group of orthogonal matrices. Therefore, each element in the predicted parameters $L_n$ is also constrained. In contrast, the LDL parameterization leaves the triangular parameters in an unconstrained representation which might enable an unstable representation. However, we did not investigate this problem beyond what was needed to get our method working. 

We also did not accumulate the mean or covariances for the VRNN, unlike for the Laplace VRNN, as this model parameterization could simply learn this function instead (or learn to undo this parameterization). The point of our experiments was not to squeeze performance out of our baseline but to have a strong reference for comparison. We also found that the VRNN was highly competitive with the Laplace VRNN.

\subsection{Predictive Ensemble Averaging} \label{ap:ensemble}
Since we sample latent variables $Z$ from our posterior $q_\theta$, when we pass multiple samples through our predictive model or the action model, we obtain multiple distributions for the output modalities. Typically this induces a mixture distribution, however, we simply averaged out either the logits or the means and variances \cite{wang2020striving}. This can be interpreted as a normalized Gaussian convolution over the output parameters or simply as a kind of bagging strategy over ensemble members $Z=z^{(i)}, i=1, \dots, k$.

The reasoning for opting for ensemble averaging instead of mixture distributions is that this simply achieved more stable training in combination with PPO \cite{schulman2017proximal}. The main problem was caused by the penalty term of the policy entropy, our implementation did not achieve stable training when approximating this term in any way (so neither with mixtures nor with singular distributions), we only achieved stable training when the entropy term could be computed \emph{exactly}. 

We did not investigate in depth why an approximate entropy loss caused training divergence, but we speculate this is due to the problems of training on generated data as discussed in the context of language models by \citet{shumailov2023curse}. In this case, the tails of the action distribution slowly shrink over time as Monte-Carlo estimation of the entropy might not cover low-likelihood events sufficiently with small sample sizes. This phenomenon is referred to as \textit{model collapse}, not to be confused with \textit{posterior collapse} \citep{goyal2017zforcing}. We suspect that this problem could be reduced by using a proper (approximate) Bayesian inference algorithm, like maximum a posteriori optimization \cite{abdolmaleki2018maximum}, which essentially removes the instability of reinforcement learning losses by casting policy learning as a supervised learning problem.

\subsection{Environment Design}
For our testing environments we implemented three problem domains, a $1$D function regression problem \cite{finn2017modelagnostic}, a discrete $n$-armed bandit problem \cite{duan_rl2_2016}, and an $n \times n$ open grid problem \cite{zintgraf_varibad_2020}, as shown in Figure~\ref{fig:test_env}. We generated many variations of these environments to learn from by sampling their dependent task parameters.

\begin{figure*}[h]
    \centering
    \includegraphics[width=\linewidth]{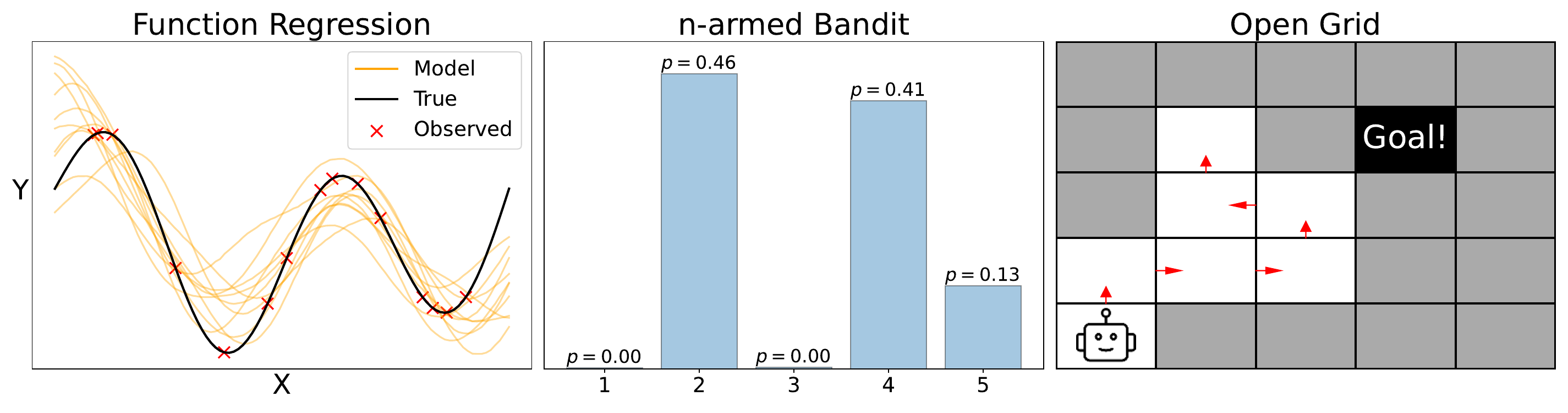}
    \caption{Visualization of sampled tasks we evaluated our method on. 1) Zero-shot learning of a function (left), 2) learning a stochastic best-arm selection algorithm (middle), and 3) learning a deterministic grid exploration agent (right).}
    \label{fig:test_env}
\end{figure*}

\textbf{Supervised.} For the supervised problem we generated noiseless $1$D test-functions on the bounded domain $[-1, 1]$. We did this through a Fourier expansion of $n=4$ components where we randomly generate amplitudes, phase shifts, and input shifts. We manually tuned the ranges for these parameters and sampled uniformly random within these ranges. In other words, we can define the joint distribution over a function dataset as,
\begin{align}
    p(X, Y) &= \delta(Y = \text{FourierSum}(Y; X, \varphi_{1:n}, c_{\text{shift}}, A_{0:n})) \cdot \text{Unif}(X; -1.0, 1.0),
    \\
    & c_{\text{shift}} \sim \text{Unif}(0.0, \pi), \varphi_i \sim \text{Unif}(0.0, \pi), A_i \sim \text{Unif}(-1.0, 1.0)
\end{align}
where the 'FourierSum' is computed in its amplitude-phase form. The training was performed with $50$ examples per sampled function.

\textbf{Bandit} To generate bandit problems we used a Dirichlet distribution with $\alpha=0.2$ during training and $\alpha = 0.3$ during testing (higher $\alpha$ makes the problem more difficult), this gave us a normalized vector of probabilities, $p_n \sim \text{Dir}(\alpha=1_n \cdot 0.2)$ for which the agent needed to find $\max_i p_i$.  Observations (which are equivalent to the rewards) were generated by sampling Bernoulli outcomes given $p_i$. Since each interaction of the agent with the bandit is seen as an episode, the environment returned discount factors of $\gamma = 0$. The training was performed over 50 total interactions.

\textbf{Gridworld} For the discrete grid environment we closely match the implementation of \cite{zintgraf_varibad_2020}. We reimplemented this environment in Jax to benefit from GPU acceleration for the data-generation process. The environment constructs a $n \times n$ open grid for the agent and uniformly randomly initializes the start and goal state (such that they don't overlap). The agent can choose between moving up, down, left, or right, the environment transitions deterministically but does not move the agent if it moves outside the bounds. The agent is rewarded with +1 if it encounters the goal and 0 otherwise, the observations were constructed as two one-hot-encoded vectors for the row and column index. The goal tile is \emph{not} observed. If the agent did not find the goal within $T=15$ steps it would be reset to its starting state and the discount factor would be set to $\gamma = 0$ at that transition. The training was performed over 100 total interactions.

\subsection{Experimental Design and Hyperparameters}

The supervised experiments did not provide additional hyperparameters to set, all necessary parameters were learned using an empirical cross-entropy with the data (see the main paper). For the reinforcement learning experiments we needed to set several hyperparameters for PPO \cite{schulman2017proximal}, these are given in Table~\ref{ap:tab:ppo_params}. We did not use mini batching for our version of recurrent PPO and accumulated the loss over the full trajectories and batches. We found that larger batch sizes gave us faster and more stable learning.

\begin{table}[h]
    \centering
    \caption{Proximal Policy Optimization loss parameters. Note that we use our Recurrent implementation for this algorithm. }
    \begin{tabular}{|l|c|c|}
        \hline
         \textbf{Name} & \textbf{Symbol} & \textbf{Value} \\
         \hline
         Minibatches    &    &  Full-Batch \\
         Batch-Size    &    &  256 \\
         TD-Lambda    &  $\lambda$   &  0.9 \\
         Discount    &  $\gamma$   &  0.9 \\
         Policy-Ratio clipping    &  $\epsilon$   &  0.2 \\
         Standardize Advantages    &    &  False \\
         Exact Policy Entropy    &    &  True \\
         Value Loss Scale    &    &  1.0 \\
         Policy Loss Scale    &    &  1.0 \\
         Entropy Loss Scale    &    &  0.1 \\
         \hline
    \end{tabular}
    \label{ap:tab:ppo_params}
\end{table}

Then our experimental design implied running an exhaustive parameter grid over the domain presented in Table~\ref{ap:tab:ablations}. This grid was adjusted over successive experiments to reduce the computational footprint, this was manually tuned to select the parameter values that performed the best for both the baseline and our method. The parameter grid was applied equivalently on all problem domains for $r=30$ distinct seeds. Although this experiment design is still quite modest, running this full grid induces $144$ distinct configurations times $30$ repetitions for the Laplace VRNN alone. One single run took on average $1 \frac12$ hour to complete for the gridworld environment on an A100 80GB NVIDIA GPU. Although, a better learning algorithm other than PPO (e.g., MPO), and minibatch optimizations could probably get the wallclock time down drastically while still achieving similar performance. 

\begin{table}[h]
    \centering
    \caption{Proximal Policy Optimization loss parameters. Note that we use our recurrent implementation for this algorithm. All configurations were repeated for $r=30$ repetitions (distinct random seeds). We also drop configurations that do not induce a valid model, e.g., $k_Z = 0$ and accumulation of $\Sigma$ is not a valid configuration since there is no window to accumulate over.}
    \begin{tabular}{|l|c|c|}
        \hline
        \textbf{Name} & \textbf{Symbol} & \textbf{Value} \\
        \hline
        \multicolumn{3}{|l|}{\textbf{Deterministic RNN}} \\
        \hline
        Posterior Dimensionality & $n$ & \{32, 64\}     \\
        \hline
        \multicolumn{3}{|l|}{\textbf{Variational RNN}} \\
        \hline
        Posterior Dimensionality & $n$ & \{32, 64\}     \\
        Covariance Parameterization &  & \{Full, Diagonal\}     \\
        Posterior KL-Penalty & $\beta$  & $\{1.0, 10^{-2}, 10^{-4}\}$     \\
        Number of Posterior Samples & $n_Z$  & $\{1, 5\}$     \\
        \hline
        \multicolumn{3}{|l|}{\textbf{Laplace VRNN}} \\
        \hline
        Posterior Dimensionality & $n$ & \{32, 64\}     \\
        Covariance Parameterization &  & \{Full, Diagonal\}     \\
        Posterior KL-Penalty & $\beta$  & $\{1.0, 10^{-2}, 10^{-4}\}$     \\
        Number of Posterior Samples & $n_Z$  & $\{1, 5\}$     \\
        History Buffer Window & $k_H$  & $\{1, 10\}$     \\
        Latent Variable Window & $k_Z$  & $\{0, 1\}$     \\
        Accumulation &  & \{$(\mu, \Sigma)$, $\Sigma$\}    \\
        \hline
    \end{tabular}
    \label{ap:tab:ablations}
\end{table}

For the finetuning experiments we essentially made model snapshots of the deterministic baselines (RNNs) and reran the ablations as shown in Table~\ref{ap:tab:ablations} using the snapshots as starting weights. For each experiment, these snapshots were taken halfway, and three-quarters way during training in terms of the number of weight-updates.

To make some final informal notes on the choices for the parameters,
\begin{itemize}
    \item We found that scaling the KL-penalties in the lower bound with hyperparameters that were slightly larger than $\beta > 10^{-2}$ caused posterior collapse \cite{goyal2017zforcing}. Meaning, our posterior simply fitted its parameters to always match the prior despite accumulating more data.
    \item For the regression problem, using multiple samples for $z^{(i)}$ inside the lower bound decreased the predictive loss a lot. Showing that integration over the predictive is effective. Although, this did not result in better test-time performance.
    \item Using a small buffer size for the history window in the posterior $q_\theta(Z_t | H_{t-k: t-1})$ is more practical since the computation of the Jacobians is the main bottleneck of our method. We found that accumulation of only the covariance where each covariance is computed with a window of just 1, $H_{t-1}$ results in the fastest method (in wallclock time) while being on par with many of the ablations.
\end{itemize}

\clearpage
\section{Supplementary Results}
\subsection{Supplementary Supervised Results}
\label{ap:sup_supervised_results}
\begin{figure}[ht]
    \centering
    \includegraphics[width=\linewidth]{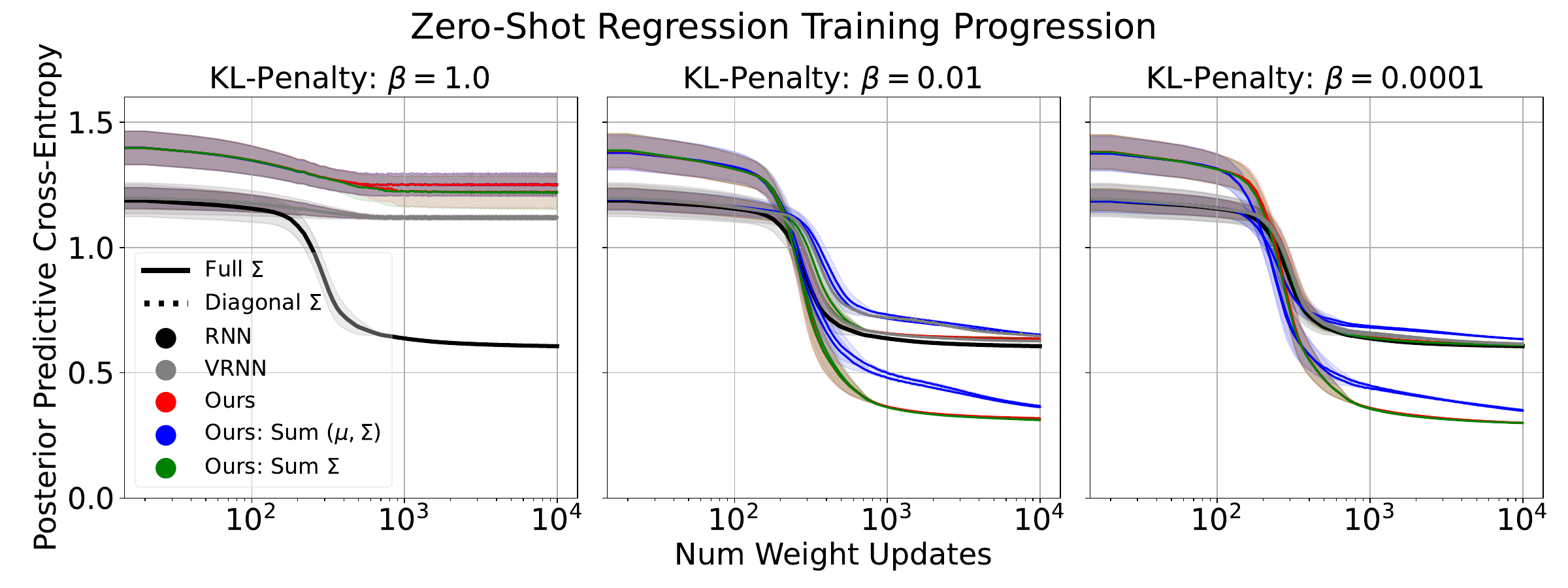}
    \caption{Progression of the predictive error during supervised model training of all ablations. Ablations are averaged over parameter groups as indicated by the legend. This figure does not show the finetuning results. To reduce computation time for subsequent results, we picked $\beta=0.01$ for the reinforcement learning task ablations.}
    \label{ap:fig:train_supervised}
\end{figure}

\begin{figure}[ht]
    \centering
    \includegraphics[width=\linewidth]{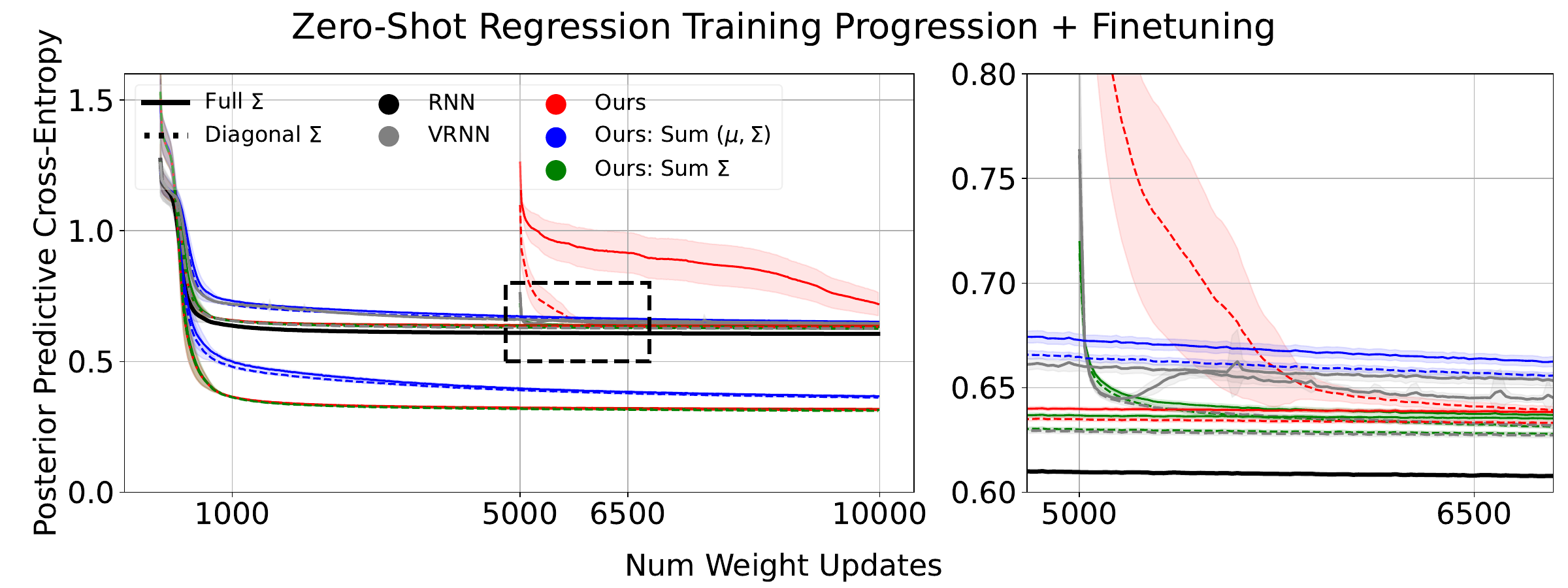}
    \caption{Zoom-in of Figure~\ref{ap:fig:train_supervised} for three finetune runs (left), for most ablations the predictive error quickly goes down to their full variational training error.}
    \label{ap:fig:train_supervised_finetune}
\end{figure}

\begin{figure}[ht]
    \centering
    \includegraphics[width=\linewidth]{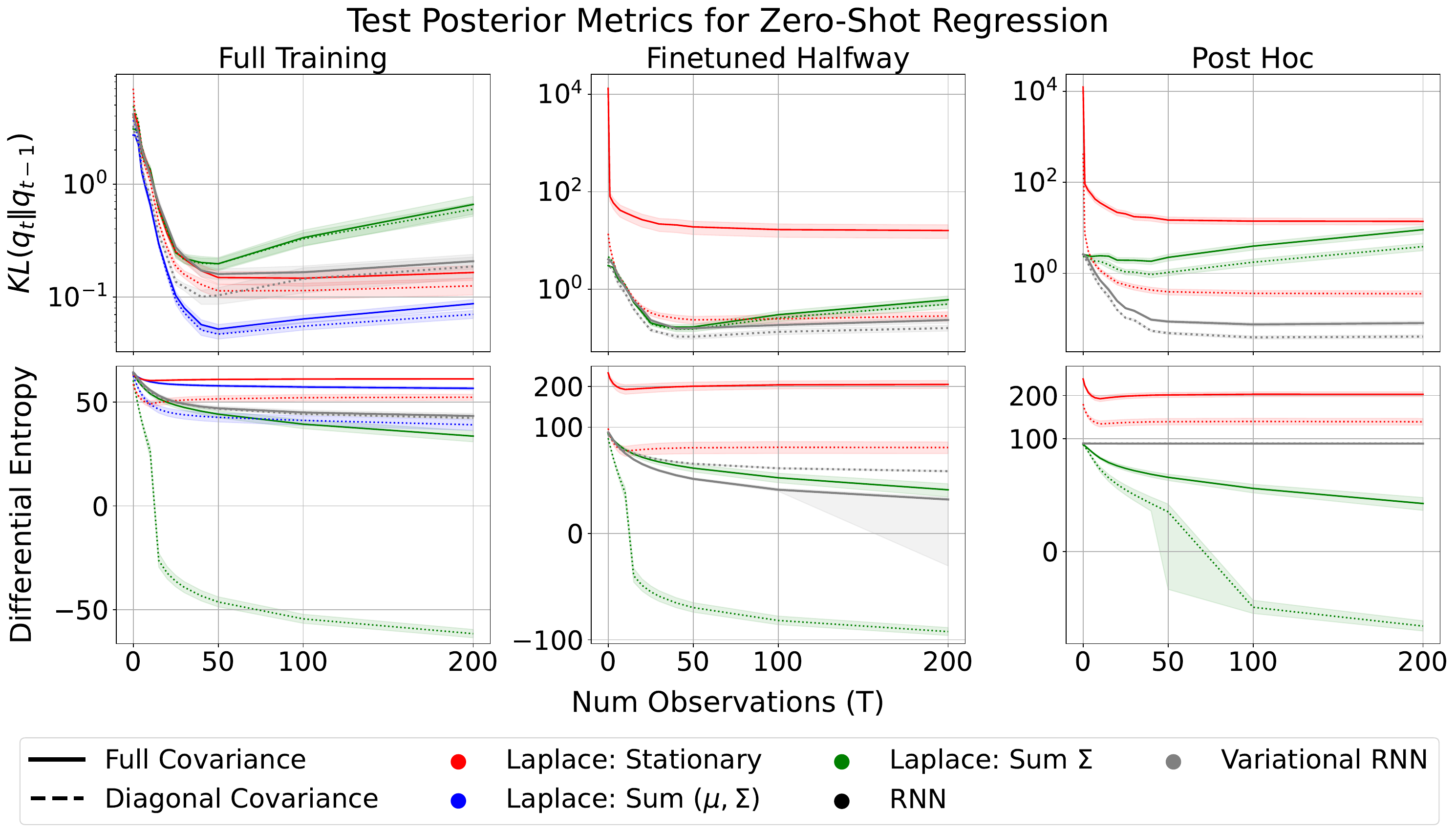}
    \caption{Posterior statistics during testing on the supervised task. The consecutive KL divergences (top) and entropy (bottom) should go down over time. The Laplace VRNN where we sum the covariances (green) is the only method that performs as expected for the entropy estimation but seems to grow more unstable for the KL-divergences. Results use $\beta=10^{-2}$.}
\end{figure}

\begin{figure}[ht]
    \centering
    \includegraphics[width=\linewidth]{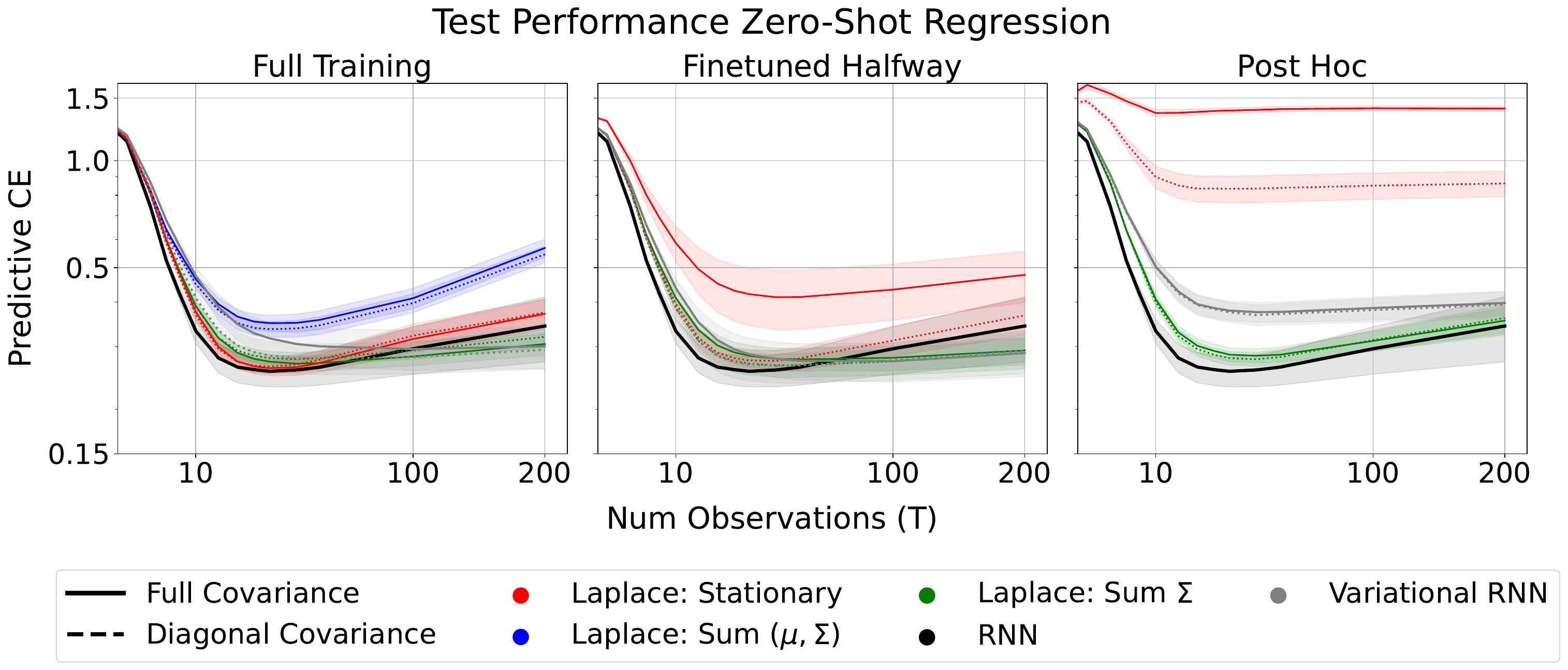}
    \caption{Complete plot of Figure~\ref{fig:supervised_test} from the main paper to include the accumulation over means and covariances (blue) in the left plot. This was left out of the main paper to improve visibility. Results use $\beta=10^{-2}$.}
\end{figure}

\clearpage
\subsection{Supplementary Bandit Results}

\begin{figure}[ht]
    \centering
    \includegraphics[width=\linewidth]{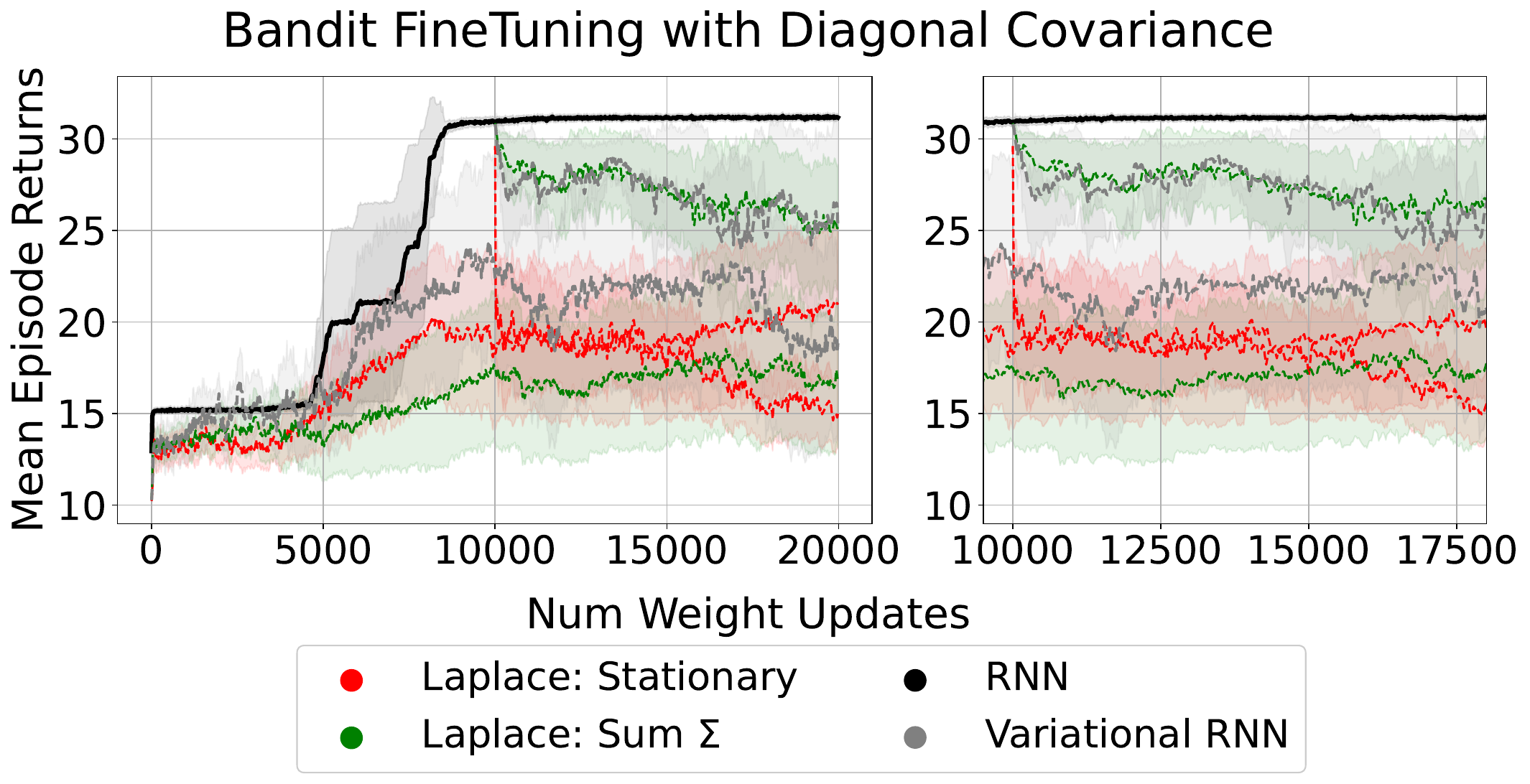}
    \caption{Zoom-in of Figure~\ref{fig:rl_train} for the bandit task when finetuning the deterministic model weights intermittently with a diagonal variational model. In this domain, it seems that finetuning slightly actually helps the expected training performance. Results use $\beta=10^{-2}$. }
\end{figure}

\begin{figure}[ht]
    \centering
    \includegraphics[width=\linewidth]{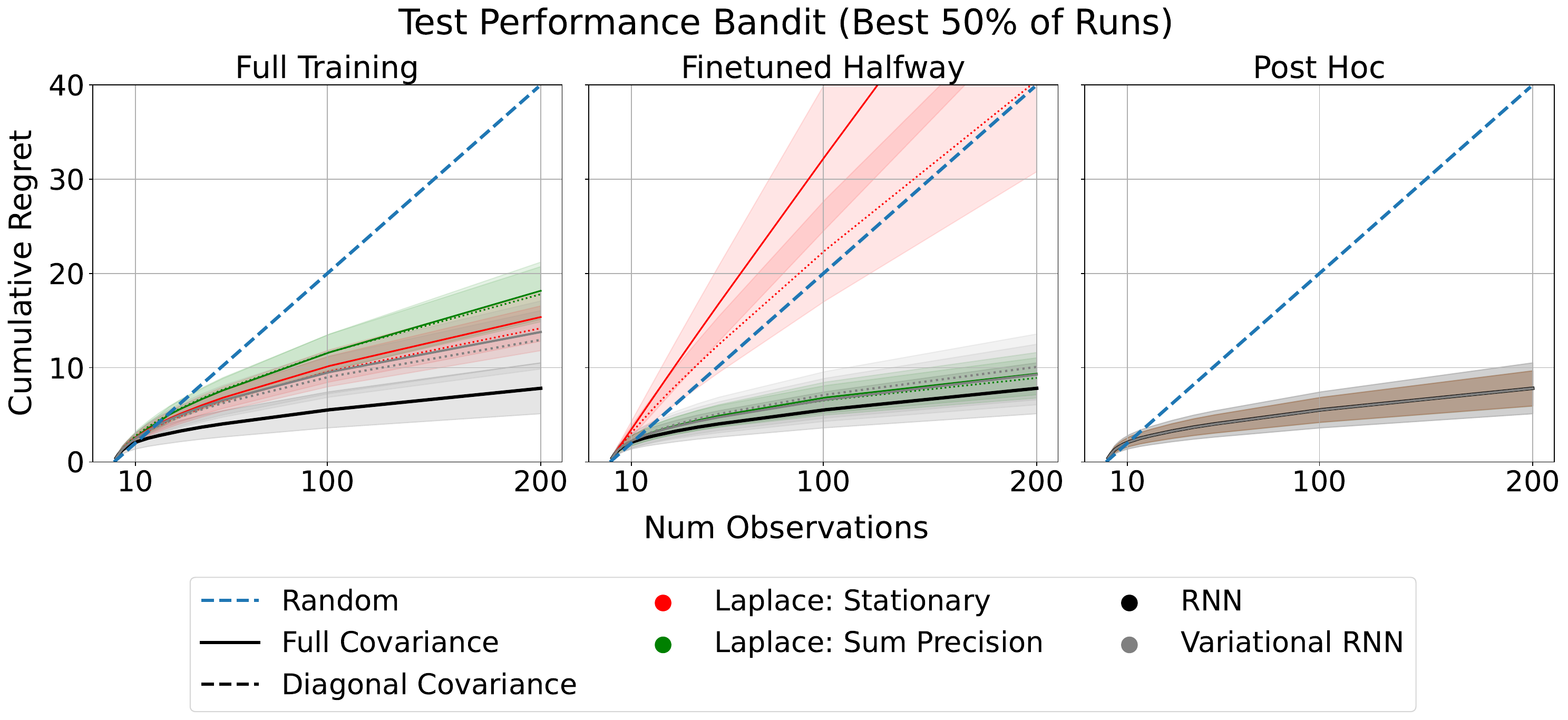}
    \caption{Cumulative regret of trained agents on the bandit task, lower is better. Since the training was quite unstable, about half of the repetitions did not find model weights with strong final performance. Only when we filtered out the better-than-median agents, did we find stronger than random performance. Results use $\beta=10^{-2}$.}
\end{figure}

\clearpage
\subsection{Supplementary Gridworld Results}
\label{ap:sup_grid_results}

\begin{figure}[ht]
    \centering
    \includegraphics[width=\linewidth]{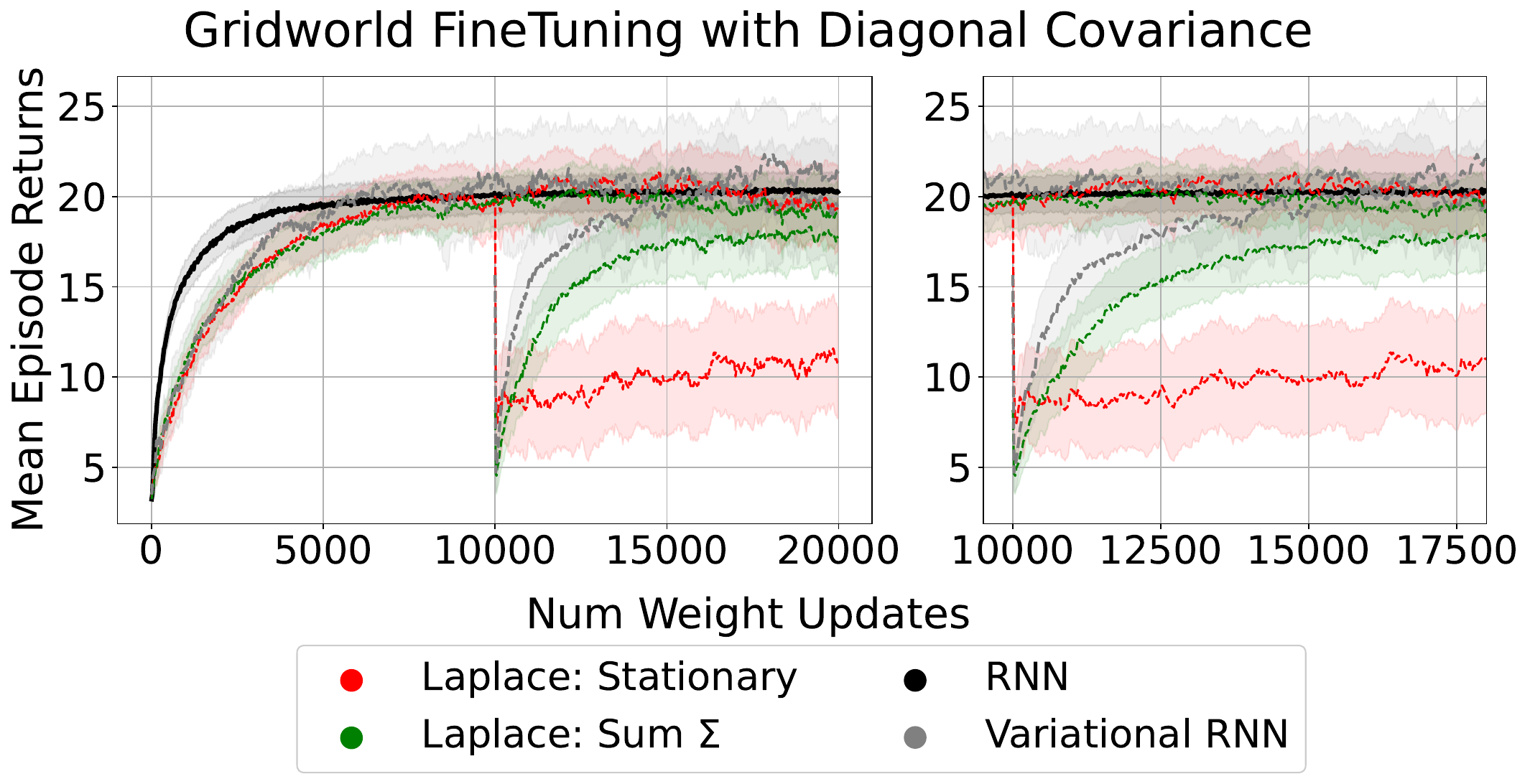}
    \caption{Zoom-in of Figure~\ref{fig:rl_train} for the gridworld task when finetuning the deterministic model weights intermittently with a diagonal variational model. In contrast to the bandit task, the agent needs to recover from this sudden change of additional model noise. Results use $\beta=10^{-2}$.}
    \label{ap:fig:rl_train_finetune}
\end{figure}

\begin{figure}[ht]
    \centering
    \includegraphics[width=\linewidth]{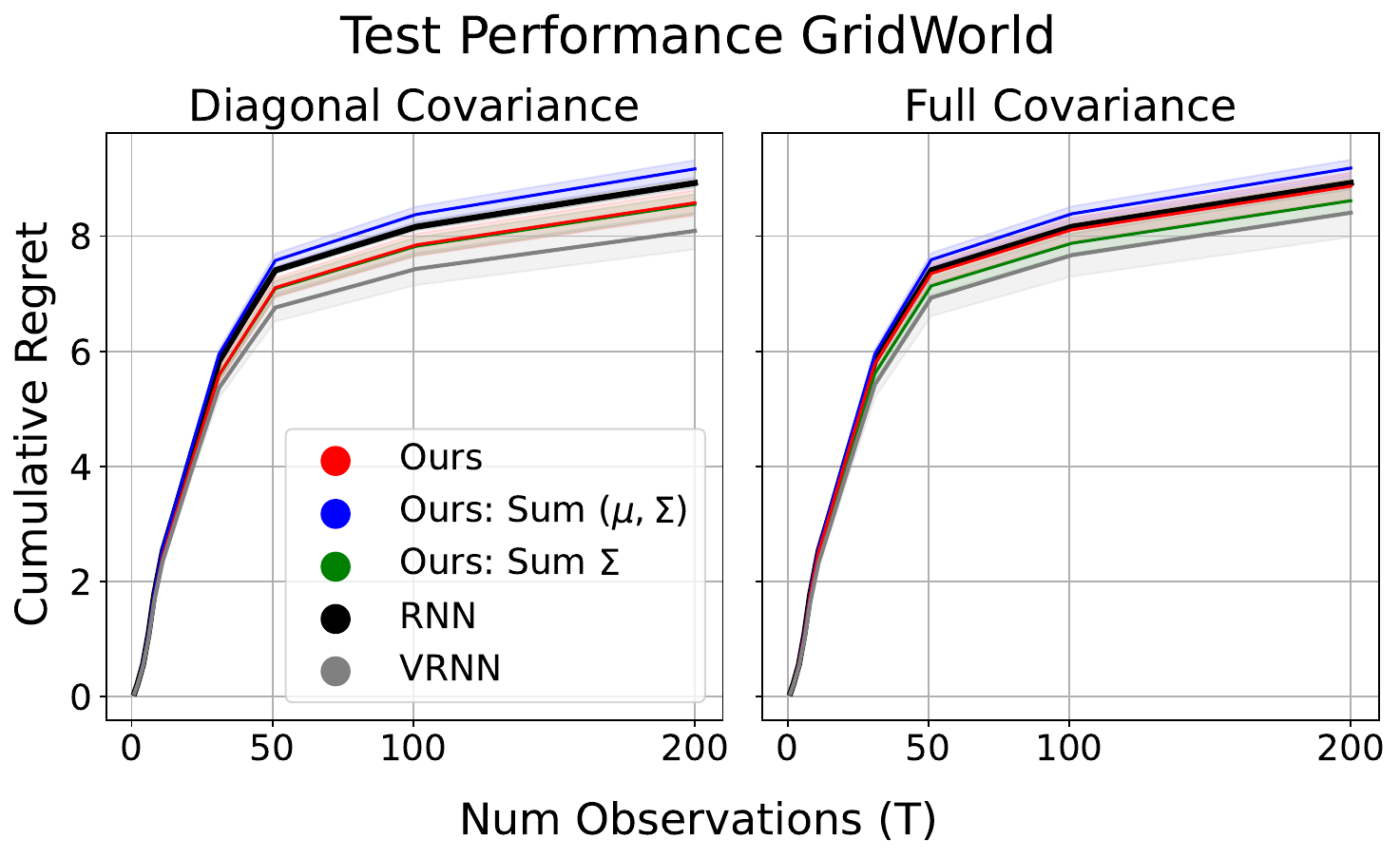}
    \caption{Cumulative regret of trained agents on the gridworld task, lower is better. All agents perform well on this task, the diagonal Variational RNN slightly outperforms all other agents. }
    \label{ap:fig:test_grid}
\end{figure}

\begin{figure}[ht]
    \centering
    \includegraphics[width=\linewidth]{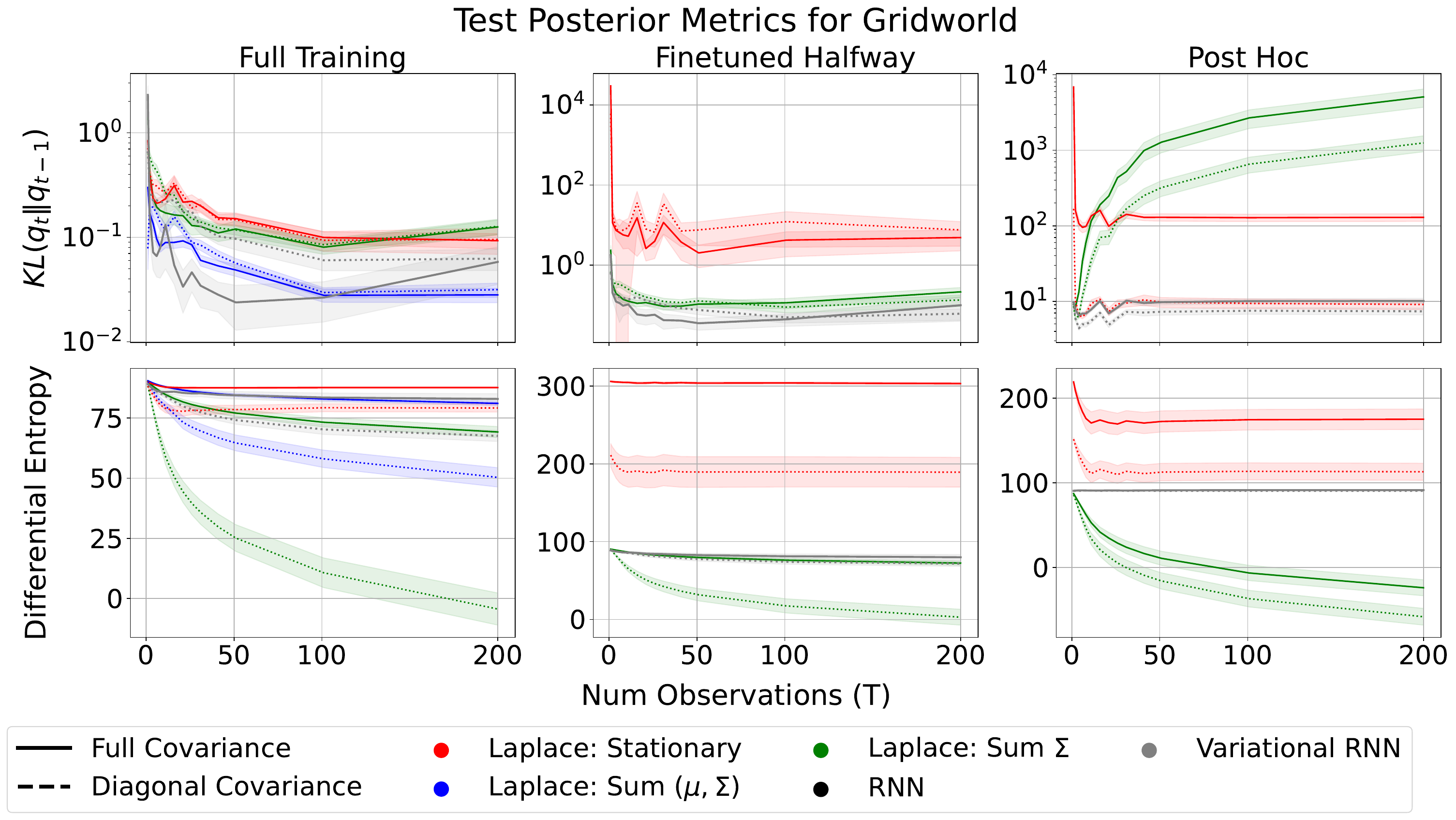}
    \caption{Complete plot of Figure~\ref{fig:grid_test_post} from the main paper to include the accumulation over mans and covariances (blue) in the left plot. Posterior statistics during testing on the gridworld task. The consecutive KL divergences (top) and entropy (bottom) should go down over time. Results use $\beta=10^{-2}$.}
\end{figure}

\end{document}